\newtheorem{theorem}{Theorem}
\newtheorem{proposition}{Proposition}
\newtheorem*{proposition*}{Proposition}
\newtheorem{lemma}{Lemma}
\newtheorem{corollary}{Corollary}
\newtheorem{definition}{Definition}
\newtheorem{remark}{Remark}
\newtheorem{assumption}{Assumption}
\newtheorem{claim}[theorem]{Claim}
\newcommand{\secref}[1]{Section~\ref{#1}}
\renewcommand{\eqref}[1]{Eq.~(\ref{#1})}
\newcommand{\lemref}[1]{Lemma~\ref{#1}}
\newcommand{\thmref}[1]{Theorem~\ref{#1}}
\newcommand{\propref}[1]{Proposition~\ref{#1}}
\newcommand{\appref}[1]{Appendix~\ref{#1}}
\newcommand{\assref}[1]{Assumption~\ref{#1}}
\newcommand{\defref}[1]{Definition~\ref{#1}}
\newcommand{\corref}[1]{Corollary~\ref{#1}}
\newcommand{\reals}{\mathbb{R}}
\newcommand{\R}{\mathbb{R}}
\newcommand{\norm}[1]{\Vert #1 \Vert}
\newcommand{\vertiii}[1]{{\left\vert\kern-0.25ex\left\vert\kern-0.25ex\left\vert #1 
    \right\vert\kern-0.25ex\right\vert\kern-0.25ex\right\vert}}
\newcommand{\mexp}{\mathbb{E}}
\newcommand{\E}{\mathop{\mexp}}
\newcommand{\eps}{\varepsilon}
\newcommand{\beq}{\begin{eqnarray*}}
\newcommand{\eeq}{\end{eqnarray*}}
\newcommand{\beqn}{\begin{eqnarray}}
\newcommand{\eeqn}{\end{eqnarray}}
\newcommand{\ent}[1][]{%
\ifthenelse{\isempty{#1}}{%
\mathrm{H}
}{
\mathrm{H}^{(#1)}
}}
\newcommand{\loch}[1][]{%
\ifthenelse{\isempty{#1}}{%
\mathrm{h}
}{
\mathrm{h}^{(#1)}
}}
\newcommand{\hide}[1]{}
\newcommand{\Acal}{\mathcal{A}}
\newcommand{\Bcal}{\mathcal{B}}
\newcommand{\Dcal}{\mathcal{D}}
\newcommand{\Fcal}{\mathcal{F}}
\newcommand{\Lcal}{\mathcal{L}}
\newcommand{\Ncal}{\mathcal{N}}
\newcommand{\Nbr}{\mathcal{N}_{[\,]}}
\newcommand{\NN}{\mathbb{N}}
\newcommand{\Lip}{\mathrm{Lip}}
\newcommand{\Hol}{\mathrm{H\ddot{o}l}}
\newcommand{\W}{\mathbb{W}}
\newcommand{\half}{\frac{1}{2}}
\newtheorem*{rep@theorem}{\rep@title}
\newcommand{\newreptheorem}[2]{%
\newenvironment{rep#1}[1]{%
 \def\rep@title{#2 \ref{##1}}%
 \begin{rep@theorem}}%
 {\end{rep@theorem}}}
\title{\bf{Near-optimal learning with average H\"older smoothness}
}
\author{
 Steve Hanneke\thanks{\texttt{steve.hanneke@gmail.com}}
 \vspace{.05in}
 \\Purdue University
 \and Aryeh Kontorovich\thanks{\texttt{karyeh@cs.bgu.ac.il}
 }
 \vspace{.05in}
 \\Ben-Gurion University\\of the Negev
 \and Guy Kornowski\thanks{\texttt{guy.kornowski@weizmann.ac.il}}  \vspace{.05in}
 \\Weizmann Institute\\of Science
}
\date{}
\begin{document}

\maketitle
\begin{center}\vspace{-1cm}\today\vspace{0.5cm}\end{center}

\begin{abstract}
We generalize the notion of average Lipschitz smoothness proposed by \citet{ashlagi2021functions} by extending it to H\"older smoothness.
This measure of the ``effective smoothness'' of a function is sensitive to the underlying distribution and can be dramatically smaller than its classic ``worst-case'' H\"older constant.
We consider both the realizable and the agnostic (noisy) regression settings, proving upper and lower risk bounds in terms of the average H\"older smoothness; these rates improve upon both previously known rates even in the special case of average Lipschitz smoothness.
Moreover, our lower bound is tight in the realizable setting up to log factors, thus we establish the minimax rate.
From an algorithmic perspective, since our notion of average smoothness is defined with respect to the unknown underlying distribution, the learner does not have an explicit representation of the function class, hence is unable to execute ERM. Nevertheless, we provide distinct learning algorithms that achieve both (nearly) optimal learning rates.
Our results hold in any totally bounded metric space, and are stated in terms of its intrinsic geometry.
Overall, our results show that the classic worst-case notion of H\"older smoothness can be essentially replaced by its average, yielding considerably sharper guarantees.
\end{abstract}

\section{Introduction}

A fundamental theme throughout learning theory and statistics is that ``smooth'' functions ought to be easier to learn than ``rough'' ones --- an intuition that has been formalized and rigorously established in various frameworks %
\citep{MR1920390,tsybakov2008nonparametric,gine2021mathematical}.
H\"older continuity
is
a 
natural
and well-studied notion of smoothness  
that
measures the extent to which 
nearby
points can differ in function value
and includes
Lipschitz
continuity
as
an important special case.

These global moduli of smoothness, while convenient for theoretical analysis, suffer from the shortcoming of being overly pessimistic. Indeed, being distribution-independent, they fail to distinguish a function that is highly oscillatory everywhere from one that is smooth over most of the probability mass; see Figure \ref{fig:avg_smooth} for a simple illustration.
\begin{figure}
\begin{center}
	\includegraphics[trim=0cm 4cm 0cm 3cm,clip=true, width=0.9\textwidth]{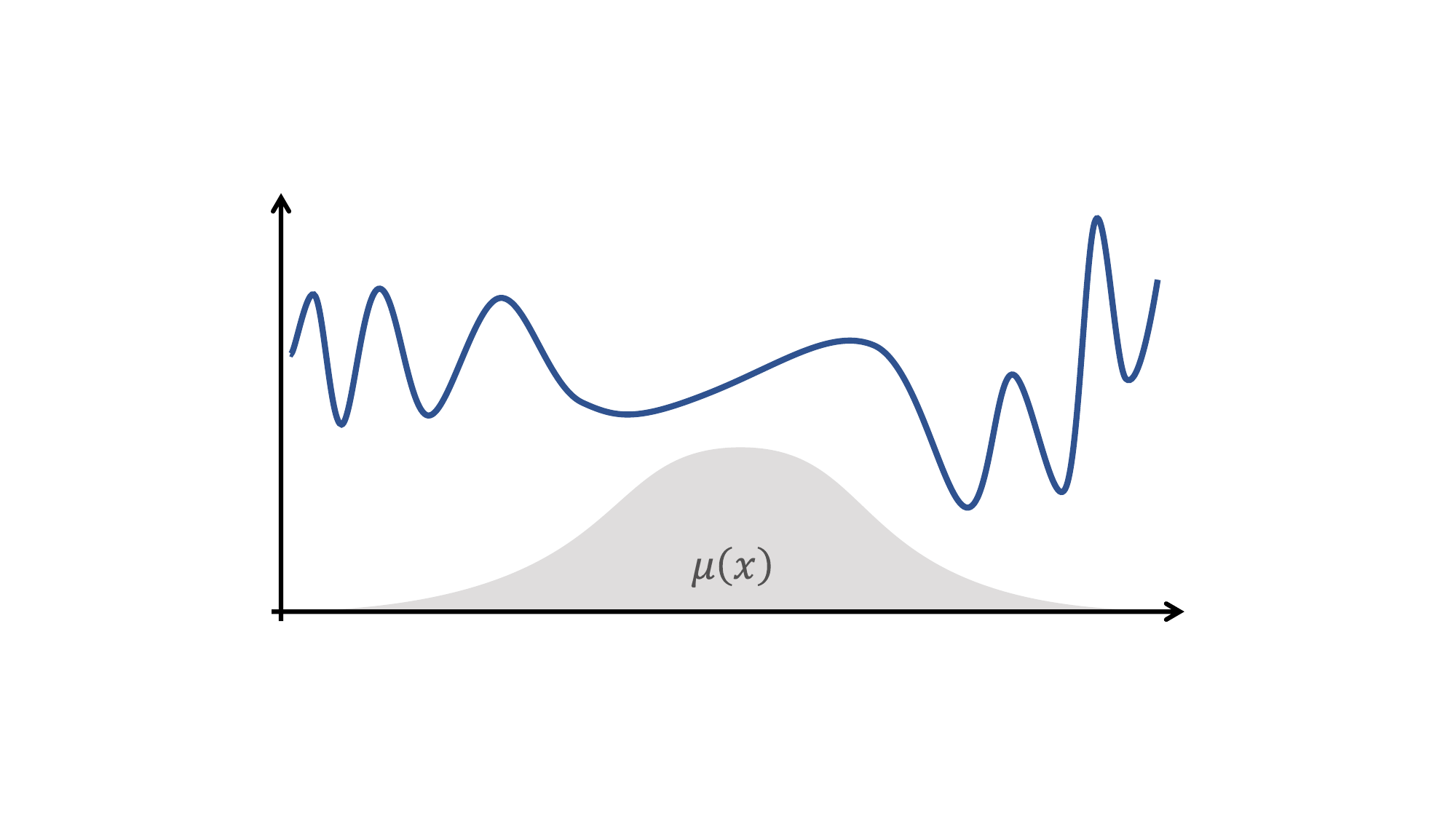}%
	\caption{Illustration of a function and a measure $\mu$
exhibiting a large gap between ``worst-case'' smoothness (occurring in low density regions) and average-smoothness with respect to $\mu$.}
	\label{fig:avg_smooth}
	\end{center}
\end{figure}
Moreover, classically studied classes of \emph{average smoothness} (e.g. Besov space) typically fix some distribution in advance (predominantly uniform), and then turn to consider smooth functions with respect to that single distribution. Thus, from a distribution-free statistical learning perspective --- where the underlying distribution is assumed to be unknown --- such classes fall short.

Seeking to address these drawbacks, \citet{ashlagi2021functions} proposed a natural notion of average Lipschitz smoothness with respect to a distribution.
Their average Lipschitz modulus can be considerably (even infinitely) smaller than the standard Lipschitz constant,
while still being able to control the excess risk. However, the risk bounds obtained by \citeauthor{ashlagi2021functions} are far from optimal, while the optimal rates for distribution-free learning of average smoothness classes remained unknown. In particular, the cost of adapting to the smoothness with respect to the underlying distribution (in contrast to using classic worst-case smoothness) remained unclear so far.

\paragraph{Our contributions.}

In this work, we generalize the aforementioned notion of average Lipschitz smoothness by extending it to H\"older smoothness of any exponent $\beta\in(0,1]$.
After formally defining the average H\"older smoothness of a function with respect to a distribution in \secref{subsec: avg smooth}, our contributions can be summarized as follows:

\begin{itemize}[leftmargin=*]
\item \textbf{Bracketing numbers upper bound (\thmref{thm: bracket}).}
We establish a nearly-optimal distribution-free bound on the bracketing entropy of our proposed average-smooth function class, serving as the main crux on which we base our analyses throughout the paper.
In particular, although it is known that asymptotically empirical covering numbers yield sharper bounds than bracketing numbers,\footnote{Uniform convergence of the $L_1(\mathcal{D})$ distance between the upper and lower bracket functions implies that, in the limit of sample size, the $\eps$-bracket functions are almost surely an empirical $(1+o(1))\eps$-cover; see \secref{sec: prelim} for a reminder of relevant definitions.}
in the case of average smoothness we reveal that the latter are tight up to a logarithmic factor.
    
\item \textbf{Realizable sample complexity (\thmref{thm: uc}).}
We derive a nearly-optimal sample complexity required for uniform convergence of average-H\"older functions in the realizable case, which was not previously known even in the special case of average Lipschitz functions.

\item \textbf{Optimal realizable learning algorithm (\thmref{thm: alg regression}).}
Since our notion of average smoothness
is defined with respect to the unknown sampling distribution, the learner does not have an explicit representation of the function class, and hence is unable to execute ERM.\footnote{Indeed, the learner cannot know for sure whether any given non-classically H\"older function belongs to the average-H\"older class.}
We are able to overcome this obstacle by constructing a realizable nonparametric regression algorithm with a nearly-optimal learning rate. Such a rate was not previously known even in the special case of average Lipschitz smoothness.

\item \textbf{Agnostic learning algorithm (\thmref{thm: agnostic alg}).}
We provide yet another learning algorithm for the fully agnostic (i.e. noisy) regression setting. Once again, our derived rate was not previously known even in the special case of average Lipschitz smoothness.

\item \textbf{Matching lower bound (\thmref{thm: lower}).} We prove a lower bound, showing that all the results mentioned above are tight up to logarithmic factors in the realizable case, establishing the (nearly) minimax risk rate for average-smooth classes.

\item \textbf{Illustrative comparisons (\secref{sec: examples}).}
Finally, we illustrate the extent to which the proposed smoothness notion is sharper than previously studied notions.
We provide examples in which the ``optimistic'' average-H\"older constant is infinitely apart from both its ``pessimistic'' worst-case counterpart, or even the average-Lipschitz ($\beta=1$) constant, exemplifying the substantial (possibly infinite) speed-ups in terms of learning rates.

\end{itemize}

\subsection{Related work.}

The sample complexities associated to distribution-free learning of (classic) H\"older classes is well covered in the literature, see for example the books by \citet{MR1920390,tsybakov2008nonparametric}.

Previous notions of average smoothness include
Bounded Variation (BV)
\citep{MR3156940}
in dimensions
one
and higher
\citep{MR0419394,MR2208697}.
One-dimensional BV
has found learning-theoretic applications
\citep{DBLP:journals/tit/BartlettKP97,DBLP:journals/iandc/Long04,MR1741038}, but to our knowledge the higher-dimensional variants have not.
Moreover, the positive results require $\mu$
to be 
uniformly distributed on a segment,
and the aforementioned results break down for more general measures
 --- especially if $\mu$ is not known to the learner.

Sobolev spaces, and 
the Sobolev-Slobodetskii norm
in particular
\citep{MR3287270},
bear some resemblance to our average H\"older smoothness.
However, \citet[Appendix I]{ashlagi2021functions} demonstrate
that
from a learning-theoretic perspective this notion is inadequate for
general (i.e., non-uniform or Lebesgue) measures, as it cannot be 
used to control
sample
complexity.
Results for controlling 
bracketing
in terms
of various measures of average smoothness
include
\citet{MR2324525}, who bound the bracketing numbers of Besov- and Sobolev-type classes
and
\citet{MR2761605}, 
who
used 
the
{\em averaged modulus of continuity} developed by \citet{MR995672};
again, these are all defined under the Lebesgue measure.
While it is easy to define these smoothness notions with respect to arbitrary distributions, we are not aware of any existing work to bound their corresponding sample complexity (or even their covering or bracketing numbers) in a distribution-independent manner.
Moreover, the smoothness notion studied in this paper is defined over arbitrary metric spaces, whereas previous notions are typically restricted to Euclidean structures (or variants thereof). Despite of the considerable generality of our setting, we are able to provide tight bounds for all metric spaces alike, without requiring specialized analyses.

A seminal work on recovering functions with
spatially inhomogeneous smoothness from noisy
samples is \citet{MR1635414}.
Arguably in the spirit of $\mu$-dependent H\"older smoothness,
some of the classic results on $k$-NN risk decay rates were refined by \citet{DBLP:journals/corr/ChaudhuriD14} via an
analysis that captures the interplay between the metric and the sampling distribution.
Another
related notion is
that of
{\em Probabilistic Lipschitzness} 
(PL)
\citep{urner:13a,urner:13,DBLP:conf/colt/KpotufeUB15},
which
seeks to relax a hard Lipschitz condition
on the 
regression function.
While PL is in the same spirit as our notion, one critical distinction from our work is that, while existing analyses of learning under PL 
have focused specifically on binary classification, our interest in the present work is learning real-valued functions.

As previously mentioned, the main feature 
setting
this work apart from
others 
studying regression under 
average smoothness is that
our notion is defined with respect to a {\em general, unknown}
measure $\mu$.
The notable exception is,
of course,
\citet{ashlagi2021functions}
--- 
who introduced the framework of efficiently learning smooth-on-average functions with respect to an unknown distribution. Although extending their definition from Lipschitz to H\"older average smoothness was straightforward, optimal minimax rates are likely inaccessible via their techniques, which relied on empirical covering numbers. Estimating the magnitude of these random
objects was a formidable challenge,
and \citeauthor{ashlagi2021functions}
were only able to do so to within an additive
error decaying with sample size;
this sampling noise appears to present
an inherent obstruction
to optimal rates.
Thus,
our results required
a novel technique
to overcome this
obstruction, which we did by tightly controlling the bracketing entropy.
Our H\"older-type extension is a direct adaptation of the
Pointwise Minimum Slope Extension (PMSE)
developed for the Lipschitz special case
by
\citeauthor{ashlagi2021functions},
which in turn is closely related to
the one introduced by
\citet{MR2431047}.

\section{Preliminaries.} \label{sec: prelim}

\paragraph{Setting.}
Throughout the paper we consider functions $f:\Omega\to[0,1]$ where $(\Omega,\rho)$ is a metric space. We will consider a distribution $\Dcal$ over $\Omega\times[0,1]$ with marginal $\mu$ over $\Omega$, such that $(\Omega,\rho,\mu)$ forms a metric probability space (namely, $\mu$ is supported on the Borel
$\sigma$-algebra induced by $\rho$).
We associate to any measurable function $f:\Omega\to[0,1]$ its $L_1$ risk $L_\Dcal(f):=\E_{(X,Y)\sim\Dcal}|f(X)-Y|$, and its empirical risk $L_S(f):=\frac{1}{n}\sum_{i=1}^{n}|f(X_i)-Y_i|$ with respect to a sample $S=(X_i,Y_i)_{i=1}^{n}\sim\Dcal^n$.
More generally, we associate to any measurable function its $L_1$ norm $\norm{f}_{L_1(\mu)}:=\E_{X\sim\mu}|f(X)|$, and given a sample $(X_1,\dots,X_n)\sim\mu^n$ we denote its $L_1$ norm with respect to the empirical measure $\norm{f}_{L_1(\mu_n)}:=\frac{1}{n}\sum_{i=1}^{n}|f(X_i)|$.

We say that a distribution $\Dcal$ over $\Omega\times[0,1]$ is {\em realizable} by a function class $\Fcal\subset[0,1]^\Omega$ if there exists
an $f^*\in\Fcal$ such that $L_{\Dcal}(f^*)=0$. Thus, $f^*(X)=Y$ almost surely, where $(X,Y)\sim\Dcal$.

\paragraph{Metric notions.}
The diameter of $A\subset\Omega$ is $\mathrm{diam}(A):=\sup_{x,x'\in A}\rho(x,x')$, and we denote by $B(x,r):=\{x'\in\Omega:\rho(x,x')\leq r\}$ the closed ball around $x\in\Omega$ of radius $r>0$.
For $t>0,~A,B\subset\Omega$, we say that $A$ is a $t$-\emph{cover} of $B$ if $B\subset\bigcup_{a\in A}B(a,t)$, and define the $t$-\emph{covering number} $\Ncal_B(t)$ to be the minimal cardinality of any $t$-cover of $B$.
We say that $A\subset B\subset \Omega$ is a $t$-\emph{packing} of $B$ if $\rho(a,a')\geq t$ for all $a\neq a'\in A$. We call $V$ a $t$-\emph{net} of $B$ if it is a $t$-cover and a $t$-packing. The induced \emph{Voronoi partition} of $B$ with respect to a net $V$ is its partitioning into subsets sharing the same nearest neighbor in $V$ (with ties broken in some consistent arbitrary manner).
A metric space $(\Omega,\rho)$ is said to be \emph{doubling} if there exists $d\in\NN$ such that every $r$-ball in $\Omega$ is contained in the union of some $d$ $r/2$-balls. The \emph{doubling dimension} is defined as $\min_{d\ge1}\log_2 d$ where the minimum is taken over $d$ satisfying the doubling property.

\paragraph{Bracketing.}

Given any two functions $l,u:\Omega\to[0,1]$, we say that $f:\Omega\to[0,1]$ belongs to the \emph{bracket} $[l,u]$ if $l\leq f\leq u$. A set of brackets $\Bcal$ is said to cover a function class $\Fcal$ if any function in $\Fcal$ belongs to some bracket in $\Bcal$.
We say that $[l,u]$ is a $t$-bracket with respect to a norm $\|\cdot\|$ if $\norm{u-l}\leq t$. The $t$-\emph{bracketing number} $\Nbr(\Fcal,\|\cdot\|,t)$ is defined as the minimal cardinality of any set of $t$-brackets that covers $\Fcal$. The logarithm of this quantity is called the \emph{bracketing entropy}.

\begin{remark}[Covering vs. bracketing] 
Having recalled two notions that quantify the ``size'' of a normed function space $(\Fcal,\|\cdot\|)$
--- namely, its 
covering 
and 
bracketing numbers
--- it is useful to note they are related through
\begin{equation} \label{eq: cov vs brack}
    \Ncal_{\Fcal}(\eps)\leq \Nbr(\Fcal,\|\cdot\|,2\eps)~,
\end{equation}
though no converse inequality of this sort holds in general. 
On the other hand, the main advantage of using bracketing numbers for generalization bounds is that it suffices to bound the {\em ambient} bracketing numbers with respect to the distribution-specific metric, as opposed to the {\em empirical covering} numbers which are necessary to guarantee generalization \citep[Section 2.1.1]{van1996weak}.
\end{remark}

\paragraph{Strong and weak mean.}

For any non-negative random variable $Z$ we define its \emph{weak mean} by $\W[Z]:=\sup_{t>0}t\Pr[Z\geq t]$, and note that $\W[Z]\leq\E[Z]$ by Markov's inequality. In the special case where $Z$ has finite support of size $N\geq 3$ where each atom has mass $1/N$ we have the reverse inequality $\E[Z]\leq 2\log(N)\W[Z]$ \citep[Lemma 22]{ashlagi2021functions}.

\subsection{Average smoothness.} \label{subsec: avg smooth}
For $\beta\in(0,1]$ and $f:\Omega\to\reals$, we define its $\beta$-slope at $x\in\Omega$ to be $\Lambda_f^\beta(x):=\sup_{y\in\Omega\setminus\{x\}}\frac{|f(x)-f(y)|}{\rho(x,y)^\beta}$.
Recall that $f$ is called $\beta$-H\"older continuous if $\norm{f}_{\Hol^\beta}:=\sup_{x\in\Omega}\Lambda_f^\beta(x)<\infty$, with this quantity serving as its H\"older seminorm. In particular, when $\beta=1$ these are exactly the Lipschitz functions equipped with the Lipschitz seminorm.
For a metric probability space $(\Omega,\rho,\mu)$, we consider the \emph{average} $\beta$-slope to be the mean of $\Lambda_f^\beta(X)$ where $X\sim\mu$. Namely, we define
\begin{align*}
\overline{\Lambda}_f^\beta(\mu)&:=\mathbb{E}_{X\sim\mu}[\Lambda_f^\beta(X)]
~,
\\
\widetilde{\Lambda}_f^\beta(\mu)&:=\W_{X\sim\mu}[\Lambda_f^\beta(X)]
=\sup_{t>0}\,t \cdot\mu(x:\Lambda_f^\beta(x)\geq t)
~.
\end{align*}
Notably,
\begin{equation} \label{eq:Lambda ineq}
\widetilde{\Lambda}_f^\beta(\mu)
\leq \overline{\Lambda}_f^\beta(\mu)
\leq \norm{f}_{\Hol^\beta}   ~,
\end{equation}
where each subsequent pair can be infinitely apart --- as we demonstrate in \secref{sec: examples}. Having defined notions of averaged smoothness, we can further define their corresponding function spaces
\begin{align*}
    \Hol^\beta_{L}(\Omega)&:=\left\{f:\Omega\to[0,1]\,:\,\norm{f}_{\Hol^\beta} \leq L\right\},
    \\
    \overline{\Hol}^\beta_{L}(\Omega,\mu)&:=\left\{f:\Omega\to[0,1]\,:\,\overline{\Lambda}_f^\beta(\mu)\leq L\right\},
    \\
    \widetilde{\Hol}^\beta_{L}(\Omega,\mu)&:=\left\{f:\Omega\to[0,1]\,:\,\widetilde{\Lambda}_f^\beta(\mu)\leq L\right\}.
\end{align*}
We occasionally omit $\mu$ when it is clear from context.  Note that $\Hol^\beta_{L}(\Omega)\subset\overline{\Hol}^\beta_{L}(\Omega,\mu)\subset\widetilde{\Hol}^\beta_{L}(\Omega,\mu)$ due to \eqref{eq:Lambda ineq}, where both containments are strict in general. 
The special case of $\beta=1$ recovers the 
average-Lipschitz
spaces $\Lip_L(\Omega)\subset\overline{\Lip}_L(\Omega,\mu)\subset\widetilde{\Lip}_L(\Omega,\mu)$ studied by \citet{ashlagi2021functions}.

\section{Generalization bounds}

Our first goal is to bound the bracketing entropy (namely, the logarithm of the bracketing number) of average-H\"older classes. We present this bound in full generality in terms of the underlying metric space, as captured by its covering number
(see \corref{cor: doubling uc} for the typical scaling of covering numbers). As we will soon establish, this bound
implies nearly-tight generalization guarantees in terms of the average smoothness constant.

\begin{theorem} \label{thm: bracket}
For any metric probability space $(\Omega,\rho,\mu)$, any $\beta\in(0,1]$ and any $0<\epsilon<L$, it holds
\begin{align*}
\log\Nbr(\overline{\Hol}^\beta_L(\Omega,\mu),L_1(\mu),\eps)
&\leq\log \Nbr(\widetilde{\Hol}^\beta_{L}(\Omega,\mu),L_1(\mu),\eps)
\\
&\leq
\Ncal_{\Omega}\left(\left(\frac{\eps}{128L\log(1/\eps)}\right)^{1/\beta}\right)\cdot\log\left(\frac{16\log_2(1/\eps)}{\eps}\right)
~.
\end{align*}
\end{theorem}

Crucially, the bound above does not depend on $\mu$, allowing us to obtain distribution-free generalization guarantees.
We defer the proof of \thmref{thm: bracket} to \appref{proof:bracket}.
We start by showing that bounding the bracketing entropy implies a generalization bound in the realizable case:

\begin{proposition} \label{prop: bracket to uc}
Suppose $(\Omega,\rho)$ is a metric space, $\Fcal\subseteq[0,1]^{\Omega}$ is a function class, and let $\Dcal$ be a distribution over $\Omega\times[0,1]$ which is realizable by $\Fcal$, with marginal $\mu$ over $\Omega$.
Then with probability at least $1-\delta$ over drawing a sample $S\sim\Dcal^n$ it holds that for all $f\in \Fcal:$

\begin{align*}
L_\Dcal(f)
\leq
1.01L_S(f)
+\inf_{\alpha\geq0}\left(\alpha+\frac{205\log \Ncal_{[\,]}(\Fcal,L_1(\mu),\alpha)}{n}\right)
+\frac{205\log(1/\delta)}{n}~.    
\end{align*}

\end{proposition}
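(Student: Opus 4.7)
Fix any $\alpha > 0$ and let $\Bcal_\alpha = \{[l_j, u_j]\}_{j=1}^{N_\alpha}$ be a minimal $\alpha$-bracketing cover of $\Fcal$ in $L_1(\mu)$, of size $N_\alpha = \Nbr(\Fcal, L_1(\mu), \alpha)$. For each bracket, introduce the ``upper loss envelope''
\[
\psi_j(x,y) \eqdef \max\paren{u_j(x) - y,\; y - l_j(x)} \in [0, 1].
\]
A three-case split on the position of $y$ relative to $[l_j(x), u_j(x)]$ verifies that for every $f$ with $l_j \leq f \leq u_j$ and every $(x,y) \in \Omega \times [0,1]$,
\[
|f(x) - y| \;\leq\; \psi_j(x, y) \;\leq\; |f(x) - y| + (u_j(x) - l_j(x)).
\]
Integrating against $\Dcal$ yields $L_\Dcal(f) \leq \E[\psi_j] \leq L_\Dcal(f) + \alpha$, and integrating against the empirical measure yields $\tfrac{1}{n}\sum_i \psi_j(X_i, Y_i) \leq L_S(f) + \tfrac{1}{n}\sum_i (u_j - l_j)(X_i)$.

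The main step is a one-sided Bernstein inequality applied to each of the $2 N_\alpha$ nonnegative $[0,1]$-valued random variables $\{\psi_j(X, Y)\}_j \cup \{(u_j - l_j)(X)\}_j$. The variance--mean coupling $\Var{Z} \leq \E[Z^2] \leq \E[Z]$ for $Z \in [0,1]$, combined with realizability (which forces $L_\Dcal$ and $\E[\psi_j]$ to be simultaneously controlled by $L_S$), is what upgrades Bernstein from the slow $O(\sqrt{\log N_\alpha / n})$ to the fast $O(\log N_\alpha / n)$ rate. Union-bounding over the $2N_\alpha$ events with total failure probability $\delta$ and splitting each $\sqrt{\E[\cdot]\log(N_\alpha/\delta)/n}$ term by AM-GM into $\eta\,\E[\cdot] + O((\eta n)^{-1}\log(N_\alpha/\delta))$ for a small tunable slack $\eta > 0$ yields, simultaneously for all $j$, with probability $\geq 1 - \delta$,
\[
\E[\psi_j] \;\leq\; (1 + \eta)\,\tfrac{1}{n}\sum_{i=1}^n \psi_j(X_i, Y_i) + \tfrac{C_\eta \log(N_\alpha/\delta)}{n},
\]
\[
\tfrac{1}{n}\sum_{i=1}^n (u_j - l_j)(X_i) \;\leq\; (1 + \eta)\,\alpha + \tfrac{C_\eta \log(N_\alpha/\delta)}{n}.
\]
Chaining these with the empirical pointwise bound on the bracket $j(f)$ containing $f$ gives, for every $f \in \Fcal$,
\[
L_\Dcal(f) \;\leq\; \E[\psi_{j(f)}] \;\leq\; (1+\eta)\, L_S(f) + (1+\eta)^2 \alpha + \tfrac{C'_\eta \log(N_\alpha/\delta)}{n}.
\]

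Finally, the minimizer $\alpha^*$ of $\alpha \mapsto \alpha + 205 \log\Nbr(\Fcal, L_1(\mu), \alpha)/n$ is a deterministic quantity depending only on $\mu$, not on the realized sample. Hence no uniformity in $\alpha$ is needed: one instantiates the argument once at $\alpha = \alpha^*$. Picking $\eta$ small enough that $(1+\eta) \leq 1.01$ controls the leading factor, and the residual $((1+\eta)^2 - 1)\alpha^*$ is absorbed into the $\log(N_{\alpha^*}/\delta)/n$ remainder via the balancing property $\alpha^* \lesssim \log\Nbr(\alpha^*)/n$ at the minimum of the functional, producing the stated bound. \emph{The main obstacle is calibrational rather than conceptual:} nailing the specific constants $1.01$ and $205$ requires coordinated tuning of the Bernstein slack $\eta$, the AM-GM splits, and the bracket-chaining overheads. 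The one genuinely essential ingredient is the variance--mean coupling from $[0,1]$-boundedness, which is what unlocks the fast $\log N_\alpha / n$ rate in the realizable regime.
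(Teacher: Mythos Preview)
Your approach is correct in substance and takes a genuinely different route from the paper's. The paper exploits realizability from the outset: it fixes the realizing $f^*\in\Fcal$, reduces the loss $|f(X)-Y|$ to the function $\ell_f(x)=|f(x)-f^*(x)|$ on $\Omega$ alone, builds brackets $[l_{\ell_f},u_{\ell_f}]$ for this loss class from the given brackets of $\Fcal$, and applies Bernstein only to the \emph{lower} bracket $l_{\ell_f}$. Because $\|\ell_f\|_{L_1(\mu)}\le\|l_{\ell_f}\|_{L_1(\mu)}+\alpha$ uses the \emph{true} (not empirical) bracket width, the coefficient on $\alpha$ is exactly $1$ after a single Bernstein application and one AM--GM split. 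Your route instead brackets the loss $(x,y)\mapsto|f(x)-y|$ directly on $\Omega\times[0,1]$ via the upper envelope $\psi_j$, and then needs a second Bernstein application to control the empirical width $\tfrac1n\sum_i(u_j-l_j)(X_i)$; this is what pushes the coefficient on $\alpha$ above $1$. A noteworthy byproduct of your route is that realizability is never invoked---your chain holds for any $\Dcal$, which is precisely the bottleneck the paper flags in its Discussion as the obstacle to the agnostic case.

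One step does not go through as written: the ``balancing property'' $\alpha^*\lesssim\log\Nbr(\alpha^*)/n$ at the minimizer is not a general fact. For instance, if $\Nbr(\cdot)$ drops to $1$ at some threshold $a>0$ and $205\log\Nbr(a^-)/n>a$, then the infimum is attained at $\alpha^*=a$ where $\log\Nbr(\alpha^*)=0$, so no absorption of $((1+\eta)^2-1)\alpha^*$ is possible. The clean fix is to mirror the paper's use of the lower bracket: set $\phi_j(x,y):=\max\{0,\,l_j(x)-y,\,y-u_j(x)\}\in[0,1]$, observe $\phi_j\le|f(x)-y|\le\psi_j$ and $\psi_j-\phi_j\le u_j-l_j$, and write $L_\Dcal(f)\le\E[\phi_j]+\alpha$ (true bracket width) together with $L_S(f)\ge\tfrac1n\sum_i\phi_j(X_i,Y_i)$. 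A single Bernstein on $\phi_j$ then yields $\E[\phi_j]-1.01\,\tfrac1n\sum_i\phi_j(X_i,Y_i)\le 205\log(N_\alpha/\delta)/n$, delivering the exact statement with coefficient $1$ on $\alpha$---still without appealing to realizability.
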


\begin{remark}[Constant is arbitrary]
In \propref{prop: bracket to uc} and in what follows, the constant multiplying $L_S(f)$ is arbitrary, and can be replaced by $(1+\gamma)$ for any $\gamma>0$ at the expense of multiplying the remaining summands by $\gamma^{-1}$. In the next section we will provide a realizable regression algorithm that returns an approximate empirical risk minimizer $f$ for which $L_S(f)\approx0$,
thus this constant will not matter for our purposes.
\end{remark}

We prove \propref{prop: bracket to uc} in \appref{proof: bracket to uc}. By combining \thmref{thm: bracket} with \propref{prop: bracket to uc} and setting $\alpha=\eps/2$, we obtain the following realizable sample complexity result.

\begin{theorem} \label{thm: uc}
For any metric space $(\Omega,\rho)$,
any $\beta\in(0,1]$ and any $0<\epsilon<L$, let $\Dcal$ be a distribution over $\Omega\times[0,1]$ realizable by $\widetilde{\Hol}^\beta_{L}(\Omega,\mu)$.
Then there exists $N=N(\beta,\eps,\delta)\in \NN$
satisfying
\[
N=\widetilde{O}\left(
\frac{\Ncal_{\Omega}\left(\left(\frac{\eps}{256L\log(1/\eps)}\right)^{1/\beta}\right)+\log(1/\delta)}
{\eps}
\right)
\]
such that as long as $n\geq N$, with probability at least $1-\delta$ over drawing a sample $S\sim\Dcal^n$ it holds that for all $f\in \widetilde{\Hol}^\beta_{L}(\Omega,\mu):$
\[
L_{\Dcal}(f)\leq 1.01L_S(f)+\eps~.
\]
The same claim holds for the smaller class $\overline{\Hol}^\beta_L(\Omega,\mu)$.
\end{theorem}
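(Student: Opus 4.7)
The plan is to derive Theorem~\ref{thm: uc} as a direct corollary of Proposition~\ref{prop: bracket to uc} combined with the bracketing entropy bound of Theorem~\ref{thm: bracket}, so no new probabilistic machinery is needed. First I would instantiate Proposition~\ref{prop: bracket to uc} with $\Fcal=\widetilde{\Hol}^\beta_L(\Omega,\mu)$, which is permitted since $\Dcal$ is assumed realizable by this class. This yields, with probability at least $1-\delta$ over $S\sim\Dcal^n$, the inequality
\[
L_\Dcal(f)\leq 1.01\,L_S(f)+\inf_{\alpha\geq 0}\left(\alpha+\frac{205\log\Nbr(\widetilde{\Hol}^\beta_L(\Omega,\mu),L_1(\mu),\alpha)}{n}\right)+\frac{205\log(1/\delta)}{n}
\]
uniformly for all $f\in\widetilde{\Hol}^\beta_L(\Omega,\mu)$.

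Next I would choose $\alpha=\eps/2$ in the infimum and invoke Theorem~\ref{thm: bracket} with resolution $\eps/2$ to bound the bracketing entropy by $\Ncal_{\Omega}\bigl((\eps/(256L\log(2/\eps)))^{1/\beta}\bigr)\cdot\log(32\log_2(2/\eps)/\eps)$. Substituting into the display above, the excess over $1.01\,L_S(f)$ is upper-bounded by
\[
\frac{\eps}{2}+\frac{205\cdot\Ncal_{\Omega}\!\bigl((\eps/(256L\log(2/\eps)))^{1/\beta}\bigr)\cdot\log(32\log_2(2/\eps)/\eps)+205\log(1/\delta)}{n}.
\]
It then suffices to enforce that the second summand is at most $\eps/2$, which gives
\[
n\geq N:=\frac{410\bigl(\Ncal_{\Omega}\!\bigl((\eps/(256L\log(2/\eps)))^{1/\beta}\bigr)\cdot\log(32\log_2(2/\eps)/\eps)+\log(1/\delta)\bigr)}{\eps}.
\]
This matches the claimed rate (the $\log(1/\eps)$ factor multiplying the covering number is absorbed into the $O(\cdot)$ notation in the stated bound).

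Finally, for the smaller class $\overline{\Hol}^\beta_L(\Omega,\mu)\subseteq\widetilde{\Hol}^\beta_L(\Omega,\mu)$ (the inclusion following from \eqref{eq:Lambda ineq}), realizability by the smaller class trivially implies realizability by the larger one, and a uniform bound over the larger class is automatically a uniform bound over the smaller one, yielding the ``same claim'' conclusion. The main obstacle here is essentially bookkeeping: verifying that the constants line up and that the $\log(1/\eps)$ factor arising from the bracketing bound can be absorbed into the big-$O$ expression, which holds because the covering-number term typically dominates under any mild growth assumption on $\Ncal_\Omega$. No new technical content beyond Theorem~\ref{thm: bracket} and Proposition~\ref{prop: bracket to uc} is required.
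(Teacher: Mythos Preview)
Your proposal is correct and matches the paper's own argument essentially verbatim: the paper states that Theorem~\ref{thm: uc} follows ``by combining \thmref{thm: bracket} with \propref{prop: bracket to uc} and setting $\alpha=\eps/2$,'' which is precisely your plan. The only minor point is your remark about absorbing the extra $\log(1/\eps)$ factor into the $O(\cdot)$; the paper is simply loose with this notation (note that the downstream \corref{cor: doubling uc} switches to $\widetilde{O}$), so no growth assumption on $\Ncal_\Omega$ is actually being invoked.
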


\begin{corollary}[Doubling metrics] \label{cor: doubling uc}
In most cases of interest, $(\Omega,\rho)$ is a doubling metric space of some dimension $d$,\footnote{Namely, any ball of radius $r>0$ can be covered by $2^d$ balls of radius $r/2$.} e.g. when $\Omega$ is a subset of $\reals^d$ (or more generally a $d$-dimensional Banach space).
For $d$-dimensional doubling spaces of finite diameter we have
$\Ncal_{\Omega}(\eps)\lesssim\left(\frac{1}{\eps}\right)^d$ \citep[Lemma 2.1]{gottlieb2016adaptive}, which, plugged into \thmref{thm: uc}, yields the simplified sample complexity bound
\[
N=\widetilde{O}\left(\frac{L^{d/\beta}}{\eps^{(d+\beta)/\beta}}\right)~,
\]
or equivalently
\[
L_{\Dcal}(f)\leq 1.01L_S(f)+\widetilde{O}\left(\frac{L^{d/(d+\beta)}}{n^{\beta/(d+\beta)}}\right)~,
\]
up to a constant which depends
(exponentially)
on $d$, but is independent of $L,n$.
\end{corollary}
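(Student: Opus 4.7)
The proof is essentially a direct computation obtained by plugging the standard doubling-space covering bound into Theorem~\ref{thm: uc}, so the plan is a sequence of substitutions followed by an inversion.

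First, I would invoke the cited covering-number bound for doubling spaces of finite diameter: for $(\Omega,\rho)$ doubling of dimension $d$, one has $\Ncal_{\Omega}(t) \le (C/t)^d$ for some absolute constant $C$ (this is \citet[Lemma 2.1]{gottlieb2016adaptive}, and is the source of the exponential-in-$d$ prefactor flagged in the statement). Substituting $t = \bigl(\eps/(256 L \log(1/\eps))\bigr)^{1/\beta}$ into this bound yields
\[
\Ncal_{\Omega}\!\left(\left(\frac{\eps}{256L\log(1/\eps)}\right)^{1/\beta}\right)
\;\le\;
\left(\frac{256\, C\, L \log(1/\eps)}{\eps}\right)^{d/\beta}
\;=\; \widetilde{O}\!\left(\frac{L^{d/\beta}}{\eps^{d/\beta}}\right),
\]
where the $\widetilde{O}(\cdot)$ absorbs the $\log(1/\eps)$ factor (raised to the power $d/\beta$) and the $d$-dependent constant $C^{d/\beta}$.

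Second, I would plug this into the sample-complexity bound from Theorem~\ref{thm: uc}, which gives
\[
N \;=\; O\!\left(\frac{\Ncal_{\Omega}(\cdot) + \log(1/\delta)}{\eps}\right)
\;=\; \widetilde{O}\!\left(\frac{L^{d/\beta}}{\eps^{d/\beta + 1}} + \frac{\log(1/\delta)}{\eps}\right)
\;=\; \widetilde{O}\!\left(\frac{L^{d/\beta}}{\eps^{(d+\beta)/\beta}}\right),
\]
where the $\log(1/\delta)/\eps$ term is dominated by the first for the regime of interest (and in any case can be folded into the $\widetilde{O}$ hiding polylogarithmic factors). This establishes the first displayed bound.

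Third, to obtain the equivalent risk-form statement, I would invert the relation $n \asymp L^{d/\beta}/\eps^{(d+\beta)/\beta}$ in $\eps$: solving gives $\eps \asymp L^{d/(d+\beta)} / n^{\beta/(d+\beta)}$, which upon substitution into the uniform-convergence guarantee of Theorem~\ref{thm: uc} yields
\[
L_{\Dcal}(f) \;\le\; 1.01\, L_S(f) + \widetilde{O}\!\left(\frac{L^{d/(d+\beta)}}{n^{\beta/(d+\beta)}}\right)
\]
for every $f \in \widetilde{\Hol}^\beta_L(\Omega,\mu)$ (and hence for every $f \in \overline{\Hol}^\beta_L(\Omega,\mu)$ by the containment noted in Section~\ref{subsec: avg smooth}). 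There is no real obstacle here --- the only point deserving comment is that the factor $C^{d/\beta}$ arising from the doubling bound is exponential in $d$, which is why the corollary explicitly records that the suppressed constant depends exponentially on $d$ but not on $L$ or $n$.
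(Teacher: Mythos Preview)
Your proposal is correct and takes essentially the same approach as the paper: the corollary is stated there without a separate proof, precisely because it is just the substitution of the doubling covering bound $\Ncal_\Omega(t)\lesssim (1/t)^d$ into Theorem~\ref{thm: uc} followed by the obvious inversion in $\eps$, exactly as you outline.
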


\begin{remark}[Tightness] \label{rem: tightness}
    The bounds in \thmref{thm: bracket} and \thmref{thm: uc} are both tight up to logarithmic factors,
    as we will prove in \secref{sec: lower}.
\end{remark}

\section{Realizable learning algorithm}
Recall that without knowing $\mu$, the underlying distribution over $\Omega$, we cannot know 
for sure whether a function $f$ belongs to
$\overline{\Hol}_{L}^\beta(\Omega,\mu)$
(except for the trivial case
$
f\in
\Hol^\beta_{L}(\Omega)
$).
This gives rise to the challenge of designing a
fully empirical
algorithm 
---
since standard empirical risk minimization is not possible. To that end, we provide the following algorithmic result with optimal guarantees (up to logarithmic factors).

\begin{theorem} \label{thm: alg regression}
For any metric space $(\Omega,\rho)$,
any $\beta\in(0,1]$ and any $0<\epsilon<L$, let $\Dcal$ be a distribution over $\Omega\times[0,1]$ realizable by $\overline{\Hol}^\beta_{L}(\Omega,\mu)$.
Then there exists a polynomial time learning algorithm $\Acal$, which, given a sample $S\sim\Dcal^n$ of size $n\geq N$ for some 
$N=N(\beta,\eps,\delta)\in\NN$
satisfying
\[
N=\widetilde{O}\left(
\frac{\Ncal_{\Omega}\left(\left(\frac{\eps}{256L\log(1/\eps)}\right)^{1/\beta}\right)+\log(1/\delta)}
{\eps}
\right),
\]
constructs a hypothesis $f=\Acal(S)$ such that 
$L_\Dcal(f)\leq\eps$
with probability at least
$1-\delta$.

\end{theorem}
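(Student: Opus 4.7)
The plan is to reduce this statement to the uniform convergence bound of Theorem~\ref{thm: uc}. Since the learner cannot directly execute ERM over $\overline{\Hol}^\beta_L(\Omega,\mu)$---the class itself depends on the unknown $\mu$---I would instead construct, purely from the sample $S = (X_i,Y_i)_{i=1}^n$, a deterministic extension $\hat f : \Omega \to [0,1]$ that (i) interpolates the sample, so that $L_S(\hat f) = 0$, and (ii) with high probability lies in a constant-factor dilation $\overline{\Hol}^\beta_{CL}(\Omega,\mu)$ of the target class for some absolute constant $C$. Once these two properties are secured, applying Theorem~\ref{thm: uc} to $\overline{\Hol}^\beta_{CL}(\Omega,\mu)$ with failure probability $\delta/2$ yields $L_\Dcal(\hat f) \le \eps$; the constant $C$ is absorbed into the hidden $\widetilde O$ factors in the sample complexity.

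For the construction, I would use a H\"older analogue of the Pointwise Minimum Slope Extension (PMSE) of \citet{ashlagi2021functions}. For each sample index $i$, compute the empirical local H\"older slope
\[
\hat\lambda_i := \max_{j \neq i} \frac{|Y_i - Y_j|}{\rho(X_i, X_j)^\beta},
\]
and set $\hat f$ to be the clip into $[0,1]$ of the pointwise minimum of the one-sided envelopes $x \mapsto Y_i + \hat\lambda_i\,\rho(x, X_i)^\beta$, symmetrized with its dual upper envelope so that $\hat f(X_i) = Y_i$ holds exactly. This is polynomial-time computable. Realizability gives $\hat\lambda_i \le \Lambda_{f^*}^\beta(X_i)$ for every $i$, where $f^* \in \overline{\Hol}^\beta_L(\Omega,\mu)$ is the realizer, so the empirical average of the $\hat\lambda_i$'s is dominated by an i.i.d.\ average with mean at most $L$; a standard Bernstein-type concentration yields $\frac{1}{n}\sum_i \hat\lambda_i \le 2L$ with high probability.

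The hard part is step (ii): promoting this empirical-average slope bound on $S$ to a bound on the \emph{true} average slope $\overline{\Lambda}_{\hat f}^\beta(\mu)$. For an arbitrary $x \in \Omega$, the pointwise slope $\Lambda_{\hat f}^\beta(x)$ of the PMSE is controlled by the empirical slope $\hat\lambda_{i(x)}$ at the nearest active sample point, plus an error term depending on how far $x$ lies from the sample. I would handle this by partitioning $\Omega$ into the Voronoi cells of a net at the scale $((\eps/L\log(1/\eps))^{1/\beta}$ appearing in Theorem~\ref{thm: bracket}, and then bounding, cell by cell, the contribution to $\E_\mu[\Lambda_{\hat f}^\beta]$ by the $\mu$-mass of the cell times the empirical slope at its center; a peeling argument across the level sets of $\Lambda_{f^*}^\beta$ then recovers $\overline\Lambda_{f^*}^\beta(\mu)\le L$. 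This transfer is precisely where \citet{ashlagi2021functions} paid an additive sampling error that prevented them from reaching the minimax rate. To avoid that error, I would lean on the bracketing bound of Theorem~\ref{thm: bracket} to choose the discretization scale no finer than the covering resolution already present in the sample complexity, so that the transfer loss is absorbed into the constant $C$ rather than into the rate.

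Finally, combining (i) and (ii) via a union bound, $\hat f \in \overline{\Hol}^\beta_{CL}(\Omega,\mu)$ while $L_S(\hat f)=0$, so Theorem~\ref{thm: uc} applied with $L \mapsto CL$ gives $L_\Dcal(\hat f) \le 1.01 \cdot 0 + \eps = \eps$ with probability at least $1-\delta$, provided $n$ satisfies the stated lower bound. The polynomial running time is immediate from the closed-form description of $\hat f$ as the envelope of $n$ explicit H\"older caps.
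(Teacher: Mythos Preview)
Your high-level strategy---build from the sample an extension that is both an (approximate) empirical risk minimizer and provably in $\overline{\Hol}^\beta_{CL}(\Omega,\mu)$, then invoke Theorem~\ref{thm: uc}---matches the paper's. But your sketch of the two core steps has real gaps.

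First, the concentration claim ``$\frac{1}{n}\sum_i \hat\lambda_i \le 2L$ with high probability via Bernstein'' does not follow as stated: the random variables $\Lambda_{f^*}^\beta(X_i)$ are unbounded and you are given only a first-moment bound $\E[\Lambda_{f^*}^\beta(X)]\le L$, so Bernstein is unavailable. The paper (\propref{prop: approx erm}) handles this via the weak mean and a level-set/peeling argument, incurring unavoidable polylogarithmic factors rather than a constant $2$.

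Second, and more seriously, your step~(ii) is precisely where the argument fails without two ingredients you omit. The paper does \emph{not} extend from all sample points: it first discards the $\lfloor \gamma n\rfloor$ points with largest empirical slope, and then extends via $\beta$-PMSE from a $\gamma^{1/\beta}$-\emph{net} of the survivors. Discarding is what guarantees a finite worst-case bound $\|f\|_{\Hol^\beta}\le 2\widehat\Lambda^\beta_{\hat f}/\gamma$ (\lemref{lem: ash f norm}), which is essential to control the contribution of the ``light'' Voronoi cells (those with few samples): their total $\mu$-mass is $O(\gamma)$, so you need the pointwise slope there bounded by $O(\widehat\Lambda^\beta_{\hat f}/\gamma)$, and without trimming, a single sample point with $\hat\lambda_i$ huge destroys this. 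The net subsampling is what ensures the minimum inter-point spacing needed for the slope-ratio comparison (\lemref{lem: slope ratio}), so that slopes within each heavy cell vary by at most a factor~$2$; without it, two very close sample points can make $\Lambda^\beta_{\hat f}(x)$ wildly nonuniform across a cell and your ``slope at $x$ is controlled by $\hat\lambda_{i(x)}$'' heuristic is false. Relatedly, because of the trimming and net steps the paper's output only \emph{approximately} interpolates ($L_S(f)\le \gamma(1+2\widehat\Lambda^\beta_{\hat f})$, not $0$); your insistence on exact interpolation is exactly what pushes the uncontrollable terms into step~(ii). Once these two mechanisms are in place, the Voronoi argument you outline is indeed how the paper finishes (\propref{prop: approx extend g}), but your current sketch does not close without them.
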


\begin{remark}[Doubling metrics]
As mentioned in \corref{cor: doubling uc}, in most cases of interest we have
$\Ncal_{\Omega}(\eps)\lesssim\left(\frac{1}{\eps}\right)^d$. 
In that case, the algorithm above has sample complexity
\[
N=\widetilde{O}\left(\frac{L^{d/\beta}}{\eps^{(d+\beta)/\beta}}\right)~,
\]
or equivalently
\[
L_{\Dcal}(f)=\widetilde{O}\left(\frac{L^{d/(d+\beta)}}{n^{\beta/(d+\beta)}}\right)~,
\]
up to a constant which depends (exponentially) on $d$, but is independent of $L,n$.
\end{remark}

\begin{remark}[Computational complexity]
    The algorithm 
constructed in \thmref{thm: alg regression}
involves
a
one-time preprocessing step
after which
$f(x)$
can be evaluated at any
given
$x\in\Omega$
in $O(n^2)$ time.
We note that the computation at inference time matches that of (classic) Lipschitz/H\"older regression (e.g. \citealp{7944658}).
Furthermore, the computational complexity of the preprocessing step is similar to that in \citet[Theorem 7]{ashlagi2021functions} for the average Lipschitz case, where it is shown to run in time $\widetilde{O}(n^2)$. 
The complexity
analysis
of our prepossessing
step is entirely
analogous to
theirs, and we forgo
repeating it here.
\end{remark}

We will now outline the proof of \thmref{thm: alg regression}, which appears in Section~\ref{proof:alg}. The key idea is to analyze a natural fully-empirical quantity that will serve as an estimator of the true unknown average smoothness. To that end, given a sample $S=(X_i,Y_i)_{i=1}^{n}\sim\Dcal^n$ and a function $f:\Omega\to[0,1]$, consider the following quantity which can be established directly from the data:
\[
\widehat{\Lambda}^\beta_f:=\frac{1}{n}\sum_{i=1}^{n}\sup_{X_j\neq X_i}\frac{|f(X_i)-f(X_j)|}{\rho(X_i,X_j)^\beta}~.
\]
Namely, this is the empirical average smoothness with respect to the sampled points.
It would suit us well
if the empirical average smoothness
of a function did not greatly exceed its true average smoothness,
with high probability.
The fact
something like this turns out to be true is somewhat surprising and may be of independent interest:

\begin{repproposition}{prop: approx erm}(Informal)
Let $f^*:\Omega\to[0,1]$. Then with high probability $\widehat{\Lambda}_{f^*}^\beta\lesssim \overline{\Lambda}_{f^*}^\beta$ .
\end{repproposition}

The proposition above implies that restricting to the sample, and letting  $\widehat{f}(X_i):=Y_i$ yields a function over $\{X_1,\dots,X_n\}$ which is empirically average-smooth over the sample (with high probability). We then turn to show that any such function can be approximately extended to the whole space, in a way that guarantees its average smoothness with respect to the \emph{underlying distribution}.

\begin{repproposition}{prop: approx extend g}(Informal)
Let $\widehat{f}:\{X_1,\dots,X_n\}\to[0,1]$ where $(X_i)_{i=1}^{n}\sim\mu^n$. Then it is possible to construct $f:\Omega\to[0,1]$ such that with high probability $f(X_i)\approx\widehat{f}(X_i)$ for all $i\in[N]$, and $\overline{\Lambda}^\beta_f(\mu)\lesssim \widehat{\Lambda}^\beta_{\widehat{f}}$ .
\end{repproposition}

We will now sketch the procedure described in \propref{prop: approx extend g}, which serves as the main challenge in proving \thmref{thm: alg regression}. Roughly speaking, the algorithm sorts the sampled points with respect to their relative slope to one another. Then, it discards a fraction of the sampled points with largest relative slope, which can be thought of as ``outliers''. Then, the algorithm proceeds to extend the function in a smooth fashion among the remaining ``well-behaved'' samples. A careful probabilistic analysis shows that disregarding just the right amount of samples induces small error, while being average-smooth with high probability.

Overall this procedure yields a function $f:\Omega\to[0,1]$ which is an approximate empirical-minimizer (since $f(X_i)\approx \widehat{f}(X_i)=Y_i$), while guaranteed to be averagely-smooth with respect to the unknown distribution.
Thus we can apply the uniform convergence of \thmref{thm: uc}, proving \thmref{thm: alg regression}.

\section{Agnostic learning algorithm}

Noticeably, up to this point, both the uniform convergence result we derived (\thmref{thm: uc}) as well as the algorithmic result (\thmref{thm: alg regression}) are tailored for the realizable regression setting. Inspired by a recent result of \citet{hopkins2022realizable} that showed a reduction from agnostic learning to realizable learning, we provide an algorithm for agnostic (i.e. noisy) regression of average-smooth functions.
It is worth noting that the following algorithm does not require any prior assumption on the noise model, unlike many nonparametric regression methods, due to our distribution free analysis.

\begin{theorem} \label{thm: agnostic alg}
There exists a learning algorithm $\Acal$ such that for any metric space $(\Omega,\rho)$,
any $\beta\in(0,1],~0<\epsilon<L$, and any distribution $\Dcal$ over $\Omega\times[0,1]$, given a sample $S\sim\Dcal^n$ of size $n\geq N$ for some 
$N=N(\beta,\eps,\delta)$
satisfying
\[
N=\widetilde{O}\left(
\frac{\Ncal_{\Omega}\left(\left(\frac{\eps}{640L\log(1/\eps)}\right)^{1/\beta}\right)+\log(1/\delta)}
{\eps^2}
\right),
\]
the algorithm constructs a hypothesis $f=\Acal(S)$ such that 
$L_\Dcal(f)\leq \inf_{f^*\in\overline{\Hol}_{L}^\beta(\Omega,\mu)}L_\Dcal(f^*)+\epsilon$
with probability at least
$1-\delta$.

\end{theorem}

\begin{remark}[Doubling metrics]
As mentioned in \corref{cor: doubling uc}, in most cases of interest we have
$\Ncal_{\Omega}(\eps)\lesssim\left(\frac{1}{\eps}\right)^d$. 
In that case, the algorithm above has sample complexity 
\[
N=\widetilde{O}\left(\frac{L^{d/\beta}}{\eps^{(d+2\beta)/\beta}}\right)~,
\]
or equivalently
\[
L_{\Dcal}(f)=
\inf_{f^*\in\overline{\Hol}_{L}^\beta(\Omega,\mu)}L_\Dcal(f^*)
+\widetilde{O}\left(\frac{L^{d/(d+2\beta)}}{n^{\beta/(d+2\beta)}}\right)~,
\]
up to a constant which depends
(exponentially) on $d$, but
is
independent of $L,n$.
\end{remark}

Though our agnostic algorithm is similar in spirit to that obtained by the reduction of \citet{hopkins2022realizable}, our analysis is self-contained and crucially relies on the bracketing bound given by \thmref{thm: bracket}, as well as analyzing the empirical smoothness estimator as provided by \propref{prop: approx erm}.
We also note that unlike our algorithm for realizable learning, the agnostic algorithm is not computationally efficient. This seems to be inherent for such reductions, and we do not know whether this blow-up in running time can be avoided or not.

We will now describe the proof of \thmref{thm: agnostic alg} which appears in Section~\ref{sec: agnostic alg proof}.
Given a sample $S$ of size $n$, consider dividing it into two sub-samples $S_1\cup S_2=S$ of size $n/2$ each. We first use $S_1$ in order to construct an empirical $\epsilon$-net $h_1,\dots,h_N:S_1\to[0,1]$, namely a set of functions which are sufficiently empirically smooth over the sample, yet far away enough from one another when averaged over the sample. Recalling that bracketing numbers upper bound covering numbers (\eqref{eq: cov vs brack}), and since \thmref{thm: bracket} holds true for every measure (in particular for the empirical measure), we can bound $\log N\lesssim\Ncal_{\Omega}((\epsilon/L)^{1/\beta})$.
Moreover, using \propref{prop: approx erm} we know that $f^*:=\arg\min_{f\in\overline{\Hol}_{L}^\beta(\Omega,\mu)}L_\Dcal(f)$ is likely to be $\widetilde{O}(L)$ average-smooth over $S_1$, so there must exist some $h_j$ with $\epsilon$ excess empirical loss (since $f^*$ is in the class we are $\epsilon$-covering). Thus running the realizable algorithm of \thmref{thm: alg regression} over all $\{h_1,\dots,h_N\}$, producing $f_1,\dots,f_N:\Omega\to[0,1]$, yields at least one function which has both small excess empirical error, while being smooth with respect to the underlying distribution.
Finally, running ERM over $\{f_1,\dots,f_N\}$ with respect to the fresh sample $S_2$ reveals such a good candidate function within $\frac{\log(N)+\log(1/\delta)}{\epsilon^2}$ samples by applying standard uniform convergence for finite classes (i.e. Hoeffding's inequality with the union bound).

\section{Lower bound} \label{sec: lower}

 We
 now turn to show that the bounds proved in \thmref{thm: bracket}, \thmref{thm: uc} and \thmref{thm: alg regression} are all tight up to logarithmic factors. In fact, since the bracketing entropy bound of \thmref{thm: bracket} implies the generalization bound of \thmref{thm: uc}
 and the latter implies the sample complexity in \thmref{thm: alg regression}, it is enough to show that the latter is nearly optimal.

\begin{theorem} \label{thm: lower}
    For any $\beta\in(0,1],~\eps\in(0,1)$ any metric space $(\Omega,\rho)$ and $L\geq\frac{8}{\mathrm{diam}(\Omega)}$, there exists a distribution $\Dcal$ over $\Omega\times[0,1]$ which is realizable by $\overline{\Hol}^\beta_L(\Omega)$ such that any learning algorithm that produces $f=A(S)$ with $L_{\Dcal}(f)\leq\eps$ with constant probability, must have sample complexity 
    \[
    n=\Omega\left(\frac{\Ncal_{\Omega}((\eps/L)^{1/\beta})}{\eps}\right)~.
    \]
\end{theorem}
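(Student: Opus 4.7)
The plan is to follow the classical packing-based (Assouad-style) lower bound for realizable regression, exploiting the \emph{average} smoothness constraint to gain an additional $1/\eps$ factor beyond the worst-case H\"older bound. Choose $r=c_0(\eps/L)^{1/\beta}$ for a small absolute constant $c_0$, and extract an $r$-separated set $\{x_1,\dots,x_N\}\subseteq\Omega$ of size $N=\Omega(\Ncal_\Omega(r))$, contained in a ball of radius $\mathrm{diam}(\Omega)/4$ around some center $y^{\star}$. Using the hypothesis $L\ge 8/\mathrm{diam}(\Omega)$, pick an anchor $x_0\in\Omega$ with $\rho(x_0,y^{\star})\ge\mathrm{diam}(\Omega)/2$, so that $\rho(x_0,x_i)\ge\mathrm{diam}(\Omega)/4\gtrsim L^{-1/\beta}$ for every $i$. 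For each $\sigma\in\{0,1\}^N$, define $\Dcal_\sigma$ by the marginal $\mu(x_0)=1-\eps$, $\mu(x_i)=\eps/N$, and realizer
\[
f_\sigma(x)\;=\;\tfrac12\;+\;\sum_{i=1}^N(\sigma_i-\tfrac12)\,b_i(x),
\]
where $b_i(x)=\max\{0,\,1-(2\rho(x,x_i)/r)^\beta\}$ is a $\beta$-H\"older bump of amplitude $1$ supported in $B(x_i,r/2)$. Because the $x_i$ are $r$-separated, the bumps have disjoint supports, giving $f_\sigma\in[0,1]$ with $f_\sigma(x_i)=\sigma_i$ and $f_\sigma(x_0)=1/2$, so $\Dcal_\sigma$ is realized.

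I would then verify $f_\sigma\in\overline{\Hol}^\beta_L(\Omega,\mu)$. The pointwise slope at each $x_i$ is controlled by the local H\"older constant of $b_i$, which is $O(1/r^\beta)=O(L/\eps)$, while the slope at $x_0$ is bounded by $(1/2)/\rho(x_0,\bigcup_j\mathrm{supp}(b_j))^\beta=O(L)$ by the isolation of $x_0$. Therefore
\[
\overline{\Lambda}^\beta_{f_\sigma}(\mu)\;\le\;(1-\eps)\cdot O(L)\;+\;\eps\cdot O(L/\eps)\;=\;O(L),
\]
and tuning $c_0$ (and, if necessary, shrinking the bump amplitude by an absolute constant) makes this at most $L$.

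For the lower bound I would place a uniform prior on $\sigma\in\{0,1\}^N$ and consider any sample $S$ of size $n\le c_1 N/\eps$ for a small constant $c_1$. For each $i$, $\Pr(x_i\notin S)=(1-\eps/N)^n\ge e^{-c_1}$, and conditioned on $x_i\notin S$ the bit $\sigma_i$ remains $\mathrm{Uniform}\{0,1\}$ and independent of $f_S(x_i)$, so $\E_{\sigma_i}|f_S(x_i)-\sigma_i|\ge 1/2$. Summing,
\[
\E_{\sigma,S}\bigl[L_\Dcal(f_S)\bigr]\;\ge\;\sum_{i=1}^N\mu(x_i)\cdot e^{-c_1}\cdot\tfrac12\;=\;\tfrac{e^{-c_1}}{2}\,\eps.
\]
Since the packing contribution to $L_\Dcal(f_S)$ is bounded by $\eps$ (the total packing $\mu$-mass), a reverse Markov inequality yields $\Pr(L_\Dcal(f_S)\ge c_2\eps)\ge c_3$ for absolute constants $c_2,c_3>0$, contradicting any algorithm whose success probability at precision $c_2\eps$ exceeds $1-c_3$. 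Re-parametrizing $\eps\mapsto\eps/c_2$ and invoking the monotonicity of $\Ncal_\Omega$ in its argument then delivers the stated $n=\Omega\bigl(\Ncal_\Omega((\eps/L)^{1/\beta})/\eps\bigr)$ bound.

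The main obstacle is the smoothness verification: the packing contributes exactly $\Theta(L)$ to $\overline{\Lambda}^\beta_{f_\sigma}(\mu)$ --- slopes of order $L/\eps$ weighted by total mass $\eps$ nearly saturate the $L$-budget --- leaving essentially no room for the anchor's slope. This is why $x_0$ has to sit at distance $\gtrsim L^{-1/\beta}$ from every packing point, and it is precisely the hypothesis $L\ge 8/\mathrm{diam}(\Omega)$ that guarantees such an $x_0$ exists inside $\Omega$. Once this geometric arrangement is secured, the remainder of the proof is standard information-theoretic accounting.
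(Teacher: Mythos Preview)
Your proposal is essentially the same approach as the paper's: both arguments pick two far-apart points, use a pigeonhole/Voronoi argument to place an $(\eps/L)^{1/\beta}$-packing of size $\Omega(\Ncal_\Omega((\eps/L)^{1/\beta}))$ at distance $\gtrsim\mathrm{diam}(\Omega)$ from an anchor $x_0$, put mass $1-\Theta(\eps)$ on $x_0$ and spread $\Theta(\eps)$ uniformly over the packing, assign independent $\{0,1\}$ labels on the packing, verify $\overline{\Lambda}^\beta\le L$ via the calculation $\eps\cdot O(L/\eps)+O(L)=O(L)$, and finish with the standard realizable/VC counting argument.

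The only noteworthy difference is in the extension step: the paper extends the discrete labels from $\overline{K}=\{x_0\}\cup K$ to all of $\Omega$ via the $\beta$-PMSE construction (Appendix), which guarantees $\Lambda^\beta_{f^*}(x,\Omega)=\Lambda^\beta_{\overline{f}}(x,\overline{K})$ for $x\in\overline{K}$, so the average-slope bound over the discrete support carries over verbatim. You instead build $f_\sigma$ from explicit H\"older bumps with disjoint supports; this is equally valid and arguably more elementary, but requires you to control the slope at $x_i$ against \emph{all} $y\in\Omega$ (not just other packing points), which your computation $(2/r)^\beta=O(L/\eps)$ does handle. Your final accounting via expected loss plus reverse Markov is equivalent to the paper's direct ``at least half the packing is misclassified'' count. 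One small wrinkle: your geometric claim that the packing can be taken inside a \emph{ball} of radius $\mathrm{diam}(\Omega)/4$ around some $y^\star$ with a far-away anchor is not quite how the paper argues it---the paper first fixes $x_0,x_1$ at distance $>\mathrm{diam}(\Omega)/2$ and then pigeonholes the packing into one Voronoi cell, which directly gives $\rho(x_0,x_i)\ge\mathrm{diam}(\Omega)/4$; your version needs that small adjustment to be airtight.
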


\begin{remark}[Typical case]
    In most cases of interest it holds that $\Ncal_{\Omega}(\eps)\gtrsim\left(\frac{1}{\eps}\right)^d$ for some constant $d$, e.g. when $\Omega$ is a subset of non-empty interior in $\reals^d$ (or more generally in any $d$-dimensional Banach space).\footnote{Note that assuming a subset has nonempty interior implies that it cannot be isometrically embedded to a lower dimensional space. Hence, this $d$ encapsulates the ``true'' intrinsic metric dimension.} That being the case, \thmref{thm: lower} yields the simplified sample complexity lower bound of
    \[
    n=\Omega\left(\frac{L^{d/\beta}}{\eps^{(d+\beta)/\beta}}\right)
    \]
    Equivalently, we obtain an excess risk lower bound of
    \[
    L_{\Dcal}(f)=\Omega\left(\frac{L^{d/(d+\beta)}}{n^{\beta/(d+\beta)}}\right)~.
    \]  
\end{remark}

We will now provide a proof sketch of \thmref{thm: lower}, while the full proof appears in Section~\ref{proof:lower}.
Suppose $K\subset\Omega$ is a $(\eps/L)^{1/\beta}$-net of \emph{most} of $\Omega$, yet $x_0\in\Omega$ is some ``isolated'' point at constant distance away from $K$ (we show that such $x_0,K$ always exist). Let $\mu$ be the measure that assigns $1-\eps$ probability mass to $x_0$, while the rest of the probability mass is distributed uniformly over $K$.
Now consider a (random) function that independently assigns either $0$ or $1$ to each point in $K$ uniformly, and is constant over $x_0$. Since points in $K$ are $(\eps/L)^{1/\beta}$ away from one another, the local $\beta$-slope at each point in $K$ is roughly $1/((\eps/L)^{1/\beta})^\beta=L/\eps$, while the slope at $x_0$ is small since it is far enough from other points.
Averaging over the space with respect to $\mu$, we see that the function is $\mu(K)\cdot L/\eps=L$ average-H\"older.
Now, we imitate the standard lower bound proof for VC classes over $K$: Since any point in $K$ is sampled with probability $\eps/|K|$, any learning algorithm with much fewer than $|K|/\eps\approx\Ncal_{\Omega}((\eps/L)^{1/\beta})/\eps$ examples will guess wrong a large portion of $K$, suffering $L_1$-loss of at least order of $\mu(K)=\eps$.


\section{Illustrative examples} \label{sec: examples}

Having established the control that average-H\"older smoothness has on generalization, we illustrate the vast possible gap between the average smoothness and it's ``worst-case'' classic counterpart. Indeed, in the examples we provide, the gap is infinite.
Moreover, we also show that classes of average-H\"older smoothness are significantly richer than the previously studied average-Lipschitz, %
motivating
the more general H\"older framework considered in this work. Finally, it is illuminating to notice that both claims to follow actually consist of the same simple function $f(x)=\boldsymbol{1}[x>\half]$ though with respect to different distributions, emphasizing the crucial role of the underlying distribution in terms of establishing the function classes.

\begin{claim} \label{example 1}
For any $L>0,\beta\in(0,1)$, there exist $f:\Omega\to[0,1]$ and a probability measure $\mu$ such that
\begin{itemize}
    \item $f$ is average-H\"older: $f\in\overline{\Hol}^\beta_L(\Omega,\mu)$ .
    \item $f$ is not H\"older with any finite H\"older constant: For all $M>0:~f\notin{\Hol}_{M}^\beta(\Omega)$ .
    \item $f$ is not (even) weakly-average-Lipschitz with any finite modulus:
    For all $M>0:f\notin\widetilde{\Lip}_{M}(\Omega,\mu)$ .
\end{itemize}
Thus, $\overline{\Hol}^\beta_L(\Omega,\mu)\not\subset \bigcup_{M=0}^{\infty}\left({\Hol}_{M}^\beta(\Omega)\cup\widetilde{\Lip}_{M}(\Omega,\mu)\right)$.
\end{claim}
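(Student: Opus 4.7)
The plan is to exhibit a single explicit triple $(\Omega,f,\mu)$ satisfying all three bullets simultaneously. Take $\Omega=[0,1]$ with the Euclidean metric, $f(x)=\boldsymbol{1}[x>\half]$, and $\mu$ to be the probability measure on $[0,1]$ with density $g_\alpha(x)=c_\alpha |x-\half|^{-\alpha}$, where $\alpha:=(1-\beta)/2\in(0,1-\beta)$ and $c_\alpha$ is the normalizing constant (finite since $\alpha<1$). The intuition is that $f$ has a single pathological point, so concentrating mass there will drive the Lipschitz weak mean to infinity, while leaving enough slack that the $\beta$-slope still integrates provided $\beta<1$.

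The first substantive step is to derive a closed form for the pointwise slope. Since $f$ only takes values $0$ and $1$ with a single jump at $\half$, the supremum defining $\Lambda_f^\gamma(x)$ for any $\gamma\in(0,1]$ and $x\neq\half$ is achieved by pushing $y$ across the jump and as close to $\half$ as possible, yielding $\Lambda_f^\gamma(x)=|x-\half|^{-\gamma}$. This alone settles the second bullet, since $\sup_{x}\Lambda_f^\beta(x)=+\infty$ irrespective of $\mu$, so $\|f\|_{\Hol^\beta}=\infty$.

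For the first bullet, plug the closed form into the definition:
\[
\overline{\Lambda}_f^\beta(\mu)=c_\alpha\int_0^1 |x-\half|^{-(\alpha+\beta)}\,dx,
\]
which is finite precisely because $\alpha+\beta=(1+\beta)/2<1$; calling this finite value $L$ gives $f\in\overline{\Hol}^\beta_L(\Omega,\mu)$. For the third bullet, observe that for any $t$ with $1/t<\half$,
\[
\mu\bigl(\{x:\Lambda_f^1(x)\geq t\}\bigr)=\mu\bigl(|X-\half|\leq 1/t\bigr)=\tfrac{2c_\alpha}{1-\alpha}\,t^{-(1-\alpha)},
\]
so $t\cdot\mu(\Lambda_f^1\geq t)=\Theta(t^{\alpha})\to\infty$ as $t\to\infty$; hence $\widetilde{\Lambda}_f^1(\mu)=+\infty$ and $f\notin\widetilde{\Lip}_M(\Omega,\mu)$ for every finite $M$.

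No step is really an obstacle; the only delicate point is the simultaneous tuning $\alpha+\beta<1$ together with $\alpha>0$. This is exactly where the hypothesis $\beta<1$ enters, and what distinguishes the strict H\"older regime from the Lipschitz case $\beta=1$, in which no such $\alpha$ exists and this particular construction breaks down --- reflecting the fact that the H\"older classes considered here are genuinely richer than their Lipschitz counterparts.
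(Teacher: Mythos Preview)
Your proposal is correct and is essentially identical to the paper's own proof: the paper also takes $\Omega=[0,1]$, $f=\boldsymbol{1}[x>\half]$, and the density $\frac{1}{Z}|x-\half|^{(\beta-1)/2}$, which is exactly your $g_\alpha$ with $\alpha=(1-\beta)/2$. The computations of $\Lambda_f^\gamma$, the integrability check for $\overline{\Lambda}_f^\beta$, and the divergence of $t\cdot\mu(\Lambda_f^1\ge t)\asymp t^{(1-\beta)/2}$ all match line for line.
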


\begin{claim} \label{example 2}
For any $L>0,\beta\in(0,1)$, there exist $f:\Omega\to[0,1]$ and a probability measure $\mu$ such that
\begin{itemize}
    \item $f$ is weakly-average-H\"older: $f\in\widetilde{\Hol}^\beta_L(\Omega,\mu)$ .
    \item $f$ is not strongly-average-H\"older with any finite modulus: For all $M>0:~f\notin\overline{\Hol}_{M}^\beta(\Omega)$ .
    \item $f$ is not (even) weakly-average-Lipschitz with any finite modulus:
    For all $M>0:f\notin\widetilde{\Lip}_{M}(\Omega,\mu)$ .
\end{itemize}
Thus, $\widetilde{\Hol}^\beta_L(\Omega,\mu)\not\subset \bigcup_{M=0}^{\infty}\left(\overline{\Hol}_{M}^\beta(\Omega,\mu)\cup\widetilde{\Lip}_{M}(\Omega,\mu)\right)$.
\end{claim}

We prove both of the claims above in Section~\ref{proof:examples}.

\section{Proofs} \label{sec: proofs}

\subsection{Proof of \thmref{thm: bracket}} \label{proof:bracket}

We start by stating a strengthened version of the triangle inequality (also known as the ``snowflake'' triangle inequality) which we will use later on. For any $\beta\in(0,1],~x\neq y,z\in\Omega$:
\begin{equation} \label{eq: beta triangle ineq}
\rho(x,y)^\beta \leq \rho(x,z)^\beta+\rho(z,y)^\beta
~.    
\end{equation}
Indeed, this follows from
\begin{align*}
\frac{\rho(x,z)^\beta+\rho(z,y)^\beta}{\rho(x,y)^\beta}
&\geq\frac{\rho(x,z)^\beta+\rho(z,y)^\beta}{(\rho(x,z)+\rho(z,y))^\beta}
=\left(\frac{\rho(x,z)}{\rho(x,z)+\rho(z,y)}\right)^\beta+\left(\frac{\rho(z,y)}{\rho(x,z)+\rho(z,y)}\right)^\beta
\\&
\geq\left(\frac{\rho(x,z)}{\rho(x,z)+\rho(z,y)}\right)+\left(\frac{\rho(z,y)}{\rho(x,z)+\rho(z,y)}\right)=1
~.
\end{align*}
Let $0<\eps<\frac{1}{4}$, denote 
$K:=\lceil\log_2(1/\eps)\rceil,~\eps':=\frac{1}{(K+1)2^K}$ and note that
\begin{equation} \label{eq: epsilon' lower bound epsilon}
\eps'
\geq\frac{1}{\left(\log_2(1/\eps)+2\right)2^{\log_2(1/\eps)+1}}
=\frac{\eps}{2\left(\log_2(1/\eps)+2\right)}
\geq\frac{\eps}{4\log_2(1/\eps)}
~.    
\end{equation}
Let $N=\{x_1,\dots,x_{|N|}\}$ be a $\left(\frac{\eps'}{32L}\right)^{1/\beta}$-net of $\Omega$ of size $|N|=\Ncal_{\Omega}\left(\left(\frac{\eps'}{32L}\right)^{1/\beta}\right)$, and let $\Pi=\{C_1,\dots,C_{|N|}\}$ be its induced Voronoi partition. We define $\Bcal=\{[l_j,u_j]\}_{j\in J}\subset[0,1]^{\Omega}\times[0,1]^{\Omega}$
to be the pairs of functions constructed as follows:

\begin{itemize}
    \item $l,u$ are both constant over every cell $C_i\in\Pi$, and map each cell to a value in $\{0,\frac{\eps'}{2},\eps',\frac{3\eps'}{2},\dots,1\}$.
    \item Choose some cells $S_1\subset \Pi$ such that $\mu(\bigcup_{C_i\in S_1}C_i)\leq\eps'$ and set for any $C_i\in S_1:\ l|_{C_i}=0,~u|_{C_i}=1$.
    \item For $m=2,\dots,K$ choose some ``unchosen'' cells $S_m\subset \Pi\setminus\bigcup_{j<m}S_j$ such that $\mu(\bigcup_{C_i\in S_m}C_i)\leq2^{m-1}\eps'$ and set for any $C_i\in S_m:\ l|_{C_i}\in\{0,\frac{1}{2^m},\frac{2}{2^m},\dots,\frac{2^m-2}{2^m}\},~,~u|_{C_i}=l+\frac{1}{2^{m-1}}$.
    \item In the ''remaining'' cells $S_{K+1}:= \Pi\setminus\bigcup_{j\leq K}S_j$ set for any $C_i\in S_{K+1}:$  \[
    l|_{C_i}\in\left\{0,\frac{1}{2^{K+1}},\frac{2}{2^{K+1}},\dots,\frac{2^{K+1}-2}{2^{K+1}}\right\},
    ~u|_{C_i}=l+\frac{1}{2^{{K}}}
    ~.
    \]
\end{itemize}
Notice that for any $[l,u]\in \Bcal$ we have
\begin{align*}
\norm{l-u}_{L^1(\mu)}
&=\sum_{C_i\in \Pi}\int_{C_i}|l(x)-u(x)|d\mu(x)
=\sum_{m=1}^{K+1}\sum_{C_i\in S_m}\int_{C_i}|l(x)-u(x)|d\mu(x)
\\&=\sum_{m=1}^{K+1}\sum_{C_i\in S_m}\int_{C_i}\frac{1}{2^{m-1}}d\mu(x)
=\sum_{m=1}^{K+1}\frac{1}{2^{m-1}}\sum_{C_i\in S_m}\mu(C_i)
\\&=\sum_{m=1}^{K+1}\frac{2^{m-1}\eps'}{2^{m-1}}
=\eps'(K+1)
=\frac{1}{2^K}
\leq\eps~.
\end{align*}
Furthermore, we can bound $|\Bcal|$ by noticing that any such $l$ is defined by its values over $|N|$ cells who all belong to $\{0,\frac{\eps'}{2},\eps',\dots,1\}$, and that once $l$ is fixed then any associated $u$ has at most $K+1$ possible values over each cell since it equals $l+\frac{1}{2^{m-1}}$ for some $m\in[K+1]$. Thus
\[
|\Bcal|\leq (K+1)\left(\frac{8}{\eps'}\right)^{|N|}
\leq\log_2\left(\frac{1}{\eps}\right)\cdot\left(\frac{16\log_2(1/\eps)}{\eps}\right)^{\Ncal\left(\left(\frac{\eps}{128L\log(1/\eps)}\right)^{1/\beta}\right)}
~,
\]
where the last inequality uses \eqref{eq: epsilon' lower bound epsilon} and definition of $K$.
In order to finish the proof, in remains to show that $\Bcal$ indeed cover $\widetilde{\Hol}_L^\beta(\Omega,\mu)$ as brackets. Namely, that for any $f\in\widetilde{\Hol}_L^\beta(\Omega,\mu)$ there exist $[l,u]\in \Bcal$ such that $l\leq f\leq u$.
To that end, let $f\in\widetilde{\Hol}_L^\beta(\Omega,\mu)$. Denote
\[
S_1^f:=\left\{C_i\in\Pi\mathrm{~:~}\forall x\in C_i:\Lambda_{f}^\beta(x)\geq \frac{L}{\eps'}\right\}
\]
and notice that $\bigcup\{C_i\in S_1^f\}\subseteq\{x:\Lambda_{f}^\beta(x)\geq L/\eps'\}\implies \mu(\bigcup\{C_i\in S_1^f\})\leq \eps'$.
Hence we can pick $[l,u]\in \Bcal$ such that $(l|_{C_i},u|_{C_i})\equiv(0,1)$ for any $C_i\in S^{f}_1$ (serving as $S_1$ in their construction). Clearly any such $l,u$ bound $f$ over these cells. Furthermore, for $m=2,\dots,K$ we denote
\[
S_m^f:=\left\{C_i\in\Pi\setminus\bigcup_{j<m}S_j^f\mathrm{~:~}\forall x\in C_i:\Lambda_f^\beta(x)\geq\frac{L}{2^{m-1}\eps'}
~\right\}~,
\]
and notice that $\bigcup \{C_i\in S_m^f\}\subseteq\{x:\Lambda_{f}^\beta(x)\geq L/(2^{m-1}\eps')\}\implies \mu(\bigcup \{C_i\in S_m^f\})\leq 2^{m-1}\eps'$. Consequently we can let $S^{f}_m$ serve as $S_m$ in the construction of $[l,u]\in\Bcal$, assuming we will show such a choice can serve as a bracket of $f$ over such cells. Indeed, for any $x\in C_i$ we have
\[
|f(x)-f(z_i)|
\leq\Lambda_f^\beta(z_i)\cdot\rho(x,z_i)^\beta
\overset{\eqref{eq: beta triangle ineq}}{\leq}
\frac{L}{2^{m-2}\eps'}\cdot\frac{2\eps'}{32L}
=\frac{1}{2^{m+2}},
\]
which by the triangle inequality shows in particular that for any $x,y\in C_i:$
\[
|f(x)-f(y)|\leq|f(x)-f(z_i)|+|f(z_i)-f(y)|
\leq
\frac{1}{2^{m+1}}=\frac{1}{4\cdot2^{m-1}}
~.
\]
So clearly there exists $\alpha_i\in\{0,\frac{1}{2^m},\frac{2}{2^m},\dots,\frac{2^m-2}{2^m}\}$ such that
$\alpha_i\leq f|_{C_i}\leq\alpha_i+\frac{1}{2^{m-1}}$, and by setting $l|_{C_i},u|_{C_i}=(\alpha_i,\alpha_i+\frac{1}{2^{m-1}})$ for any $C_i\in S^{f}_m$  we ensure the bracketing property. Finally, for any of the remaining cells $S_{K+1}^{f}:=\Pi\setminus\bigcup_{j\leq K}S_j^{f}$ we get by construction that $\exists z_i\in C_i:\Lambda_f^\beta(z_i)< \frac{L}{2^{K}\eps'}$ (otherwise they would satisfy the condition for some previously constructed $S_m^f$). Hence for any $x\in C_i$ we have
\[
|f(x)-f(z_i)|\leq \Lambda_f^\beta(z_i)\cdot\rho(x,z_i)^\beta
\overset{\eqref{eq: beta triangle ineq}}{\leq} \frac{L}{2^{K}\eps'}\cdot\frac{2\eps'}{32L}
=\frac{1}{2^{K+4}},
\]
which by the triangle inequality shows that for any $x,y\in C_i:$
\[
|f(x)-f(y)|\leq\frac{1}{2^{K+3}}=\frac{1}{8\cdot2^{K}}
~.
\]
So as before, there clearly exists $\alpha_i\in\{0,\frac{1}{2^{K+1}},\frac{2}{2^{K+1}},\dots,\frac{2^{K+1}-2}{2^{K+1}}\}$ such that
$\alpha_i\leq f|_{C_i}\leq\alpha_i+\frac{1}{2^{K}}$, and by setting $l|_{C_i},u|_{C_i}=(\alpha_i,\alpha_i+\frac{1}{2^{K}})$ for any $C_i\in S^{f}_m$  we ensure the bracketing property over all of $\Omega$, which finishes the proof.

\subsection{Proof of \propref{prop: bracket to uc}} \label{proof: bracket to uc}

Recalling that the realizability assumption ensures a ``perfect'' predictor $f^*\in\Fcal$, we start by introducing the loss class $\Lcal_{\Fcal}\subset[0,1]^\Omega:$
\[
\Lcal_{\Fcal}=\{\ell_f(x):=|f(x)-f^*(x)|:f\in\Fcal\}~.
\]
Fix $\alpha>0$. We observe that $\Lcal_{\Fcal}$ is no larger than $\Fcal$ in terms of bracketing entropy, namely
\begin{equation} \label{eq: bracket loss}
\Nbr(\Lcal_{\Fcal},L_1(\mu),\alpha)\leq\Nbr(\Fcal,L_1(\mu),\alpha)~.    
\end{equation}
Indeed, suppose we are given an $\alpha$-bracketing of $\Fcal$ denoted by $\Bcal_{\alpha}$, and denote for any $f\in\Fcal$ by $[l_f,u_f]\in\Bcal_{\alpha}$ its associated bracket. Then any $\ell_f\in\Lcal_\Fcal$ is inside the bracket $[l_{\ell_f},u_{\ell_f}]$ where
\begin{align*}
    l_{\ell_f}:=&~\max\{0\,,\,\min\{l_f-f^*,f^*-u_f\}\}~,
    \\
    u_{\ell_f}:=&~\min\{1\,,\,\max\{u_f-f^*,f^*-l_f\}\}
    ~.
\end{align*}
It is straightforward to verify that $\norm{u_{\ell_f}-l_{\ell_f}}_{L_1(\mu)}\leq\norm{u_f-l_f}_{L_1(\mu)}\leq\alpha$, and clearly the set of all such brackets is of size at most $|\Bcal_{\alpha}|$, yielding \eqref{eq: bracket loss}.

Now notice that for any $f\in\Fcal:$
\[
L_{\Dcal}(f)-1.01L_S(f)
=\norm{\ell_f}_{L_1(\mu)}-1.01\norm{\ell_f}_{L_1(\mu_n)}
\leq \alpha+\norm{l_{\ell_f}}_{L_1(\mu)}-1.01\norm{l_{\ell_f}}_{L_1(\mu_n)}~,
\]
hence
\begin{align} \label{eq: max RHS}
\sup_{f\in\Fcal}\left(L_\Dcal(f)-1.01L_S(f)\right)
    \leq \alpha+
    \max_{l_{\ell_f}}(\norm{l_{\ell_f}}_{L_1(\mu)}-1.01\norm{l_{\ell_f}}_{L_1(\mu_n)})
    ~.
\end{align}

In order to bound the right hand side, fix some $l_{\ell_f}$, and note that $\mathrm{Var}(l_{\ell_f})\leq\norm{l_{\ell_f}^2}_{L_1(\mu)}\leq\norm{l_{\ell_f}}_{L_1(\mu)}$, since $l_{\ell_f}(x)\in[0,1]$.
Thus by Bernstein's inequality and the AM-GM inequality we get that with probability at least $1-\gamma:$
    \begin{align*}
    \norm{l_{\ell_f}}_{L_1(\mu)}-\norm{l_{\ell_f}}_{L_1(\mu_n)}
    \leq~ &\frac{\log(1/\gamma)}{n}+\sqrt{\frac{2\norm{l_{\ell_f}}_{L_1(\mu)}\log(1/\gamma)}{n}}
    \\
    \leq~
    &\frac{202\log(1/\gamma)}{n}+\frac{1}{101}\norm{l_{\ell_f}}_{L_1(\mu)}
    \\
    \implies
    \norm{l_{\ell_f}}_{L_1(\mu)}-1.01\norm{l_{\ell_f}}_{L_1(\mu_n)}
    \leq~& \frac{205\log(1/\gamma)}{n}~.
    \end{align*}
Setting $\gamma=\delta/\Nbr(\Fcal,L_1(\mu),\alpha)$ and taking a union bound over $l_{\ell_f}$ whose number is bounded due to \eqref{eq: bracket loss}, we see that with probability $1-\delta:$
\[
\max_{l_{\ell_f}}(\norm{l_{\ell_f}}_{L_1(\mu)}-1.01\norm{l_{\ell_f}}_{L_1(\mu_n)})
\leq \frac{205\log\Nbr(\Fcal,L_1(\mu),\alpha)+205\log(1/\delta)}{n}~.
\]
Plugging this back into \eqref{eq: max RHS}, and minimizing over $\alpha>0$ finishes the proof.

\subsection{Proof of \thmref{thm: alg regression}} \label{proof:alg}

\begin{proposition} \label{prop: approx erm}
    Let $f:\Omega\to[0,1]$. Then with probability at least $1-\delta/2$ over drawing a sample it holds that 
    \[
    \widehat{\Lambda}_{f}^\beta
    \leq 4\log^2(4n/\delta)\overline{\Lambda}_f^\beta(\mu)+\frac{4\log^2(4n/\delta)}{n}~.
    \]
\end{proposition}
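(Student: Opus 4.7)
Set $M := \overline{\Lambda}_f^\beta$, $Z_i := \Lambda_f^\beta(X_i)$, and $w_{ij} := |f(X_i)-f(X_j)|/\rho(X_i,X_j)^\beta$, so that $T_i := \max_{j \neq i} w_{ij}$ and $\widehat{\Lambda}_f^\beta = n^{-1}\sum_i T_i$. The essential ingredient is the pointwise ``symmetric'' bound $w_{ij} \leq \min(Z_i, Z_j)$, which yields two complementary tail estimates on each $T_i$: first, $T_i \leq Z_i$ gives $\Pr[T_i \geq \tau] \leq M/\tau$ by Markov; second, conditioning on $X_i$ and union-bounding over the $n-1$ choices of $j$ gives $\Pr[T_i \geq \tau \mid X_i] \leq (n-1)(M/\tau)\mathbf{1}[Z_i \geq \tau]$, hence $\Pr[T_i \geq \tau] \leq (n-1)(M/\tau)^2$. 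In particular $\E[T_i] \leq \E[Z_i] = M$, so the real task is to prove a multiplicative concentration of $\widehat{\Lambda}_f^\beta$ around its mean within a factor of $\log^2(n/\delta)$.

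To get that concentration, I plan to apply the strong-vs.-weak-mean inequality (Ashlagi et al.'s Lemma~22): for any $n$-atom discrete measure, $\E_{\mu_n}[\cdot] \leq 2\log(n)\,\W_{\mu_n}[\cdot]$. Applied to the empirical distribution of the $T_i$'s, this yields
\[
\widehat{\Lambda}_f^\beta \;\leq\; 2\log(n)\,\sup_{t > 0} t\cdot |\{i: T_i \geq t\}|/n,
\]
reducing the problem to bounding the empirical weak mean by $O(M\log(n/\delta))$ with probability at least $1-\delta/2$; the final $\log^2(4n/\delta)$ factor then arises from the product $2\log(n)\cdot \log(n/\delta)$.

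I plan to control the empirical weak mean by a peeling argument with dyadic thresholds $\tau_k := 2^k \max(M, 1/n)$ for $k = 0, \ldots, K$ with $K := \lceil \log_2(8n/\delta)\rceil$. Writing $N_k := |\{i: T_i \geq \tau_k\}|$, I would combine two complementary estimates. On one hand, since $N_k$ is dominated by the iid Binomial $|\{i: Z_i \geq \tau_k\}|$ of mean at most $n\cdot 2^{-k}$, Bernstein's inequality plus a union bound over the $K$ layers gives $N_k \leq 2n\cdot 2^{-k} + C\log(K/\delta)$ simultaneously for all $k < K$, with probability $\geq 1 - \delta/4$. On the other hand, Markov applied to the ``heavy edge'' count $E_k := |\{(i,j): i<j,\, w_{ij} \geq \tau_k\}|$, whose expectation is $\leq \binom{n}{2}/4^k$, forces $E_K = 0$ (hence $N_K = 0$) in the topmost layer with probability $\geq 1-\delta/4$. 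Substituting into $\sup_k \tau_k N_k/n$ and summing the bulk contributions gives $\W_{\mu_n}[T] \leq O(M\log(n/\delta))$ as desired.

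The main technical obstacle is the delicate bookkeeping in combining the two complementary estimates on $N_k$: iid Bernstein via $Z_i$ yields the correct $\log(1/\delta)$ dependence but is useful only in the ``bulk'' layers, while the edge-based Markov alone would give a $1/\delta$ penalty if used to concentrate around a nonzero mean. The resolution is to use Bernstein for layers $k < K$ and reserve the edge-based Markov for the single top layer $k = K$, where only the zero-one statement $N_K = 0$ is needed, thereby just barely avoiding any $\mathrm{poly}(1/\delta)$ blow-up and preserving the logarithmic dependence on $\delta$ throughout.
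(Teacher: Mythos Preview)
Your high-level strategy matches the paper's: pass from the empirical mean to the empirical weak mean via the $\E \le 2\log(n)\,\W$ inequality, then control $\sup_t t\,\mu_n(\{\Lambda_f^\beta \ge t\})$ by a three-regime dyadic peeling (trivial low tail, concentration in the bulk, vanishing top tail). The paper does not exploit the within-sample structure $T_i \le Z_i$ or your edge-count idea; it bounds $\widehat{\Lambda}_f^\beta \le \frac1n\sum_i Z_i$ right away and works with the i.i.d.\ $Z_i$'s throughout.

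There is, however, a genuine gap in your bulk step. From Bernstein you get $N_k \le 2n\cdot 2^{-k} + C\log(K/\delta)$, and then
\[
\frac{\tau_k N_k}{n} \;\le\; 2\max(M,1/n) \;+\; \frac{2^k \max(M,1/n)\, C\log(K/\delta)}{n}\,.
\]
For $k$ close to $K=\lceil\log_2(8n/\delta)\rceil$ the second term is of order $\max(M,1/n)\cdot\log(K/\delta)/\delta$, i.e.\ a $1/\delta$ blow-up, not the $O(M\log(n/\delta))$ you claim. Your edge-based Markov argument only kills layer $K$ itself (and, with a union bound, layers $k$ with $4^k \gtrsim Kn^2/\delta$, i.e.\ $2^k \gtrsim n/\sqrt{\delta}$), while the additive Bernstein term is under control only for $2^k \lesssim n$; the intermediate range $n \lesssim 2^k \lesssim n/\sqrt{\delta}$ is not covered by either tool, and this is exactly where the $1/\delta$ loss enters.

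The paper's fix is to replace additive Bernstein by \emph{multiplicative} Chernoff with a large factor $\alpha=\log(4n/\delta)-1$. To keep the Chernoff exponent bounded below, the paper augments the level sets to $M_f^+(t)\supset M_f(t)$ with $\mu(M_f^+(t))\ge 1/n$, so that the failure probability $(e^\alpha/(1+\alpha)^{1+\alpha})^{n\mu(M_f^+)}$ is uniformly small; the union bound then runs over $N=\lceil 2\log(4n/\delta)\log\log(4n/\delta)\rceil$ dyadic levels. This device is precisely what eliminates the additive term that wrecks your argument, and is the missing idea in your proposal.
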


\begin{corollary} \label{cor: approx erm}
    If $\Dcal$ is realizable by $\overline{\Hol}^\beta_{L}(\Omega,\mu)$, then for $f^*:\Omega\to[0,1]$ such that $L_{\Dcal}(f^*)=0$ it holds with probability at least $1-\delta/2:~\widehat{\Lambda}^\beta_{f^*}\leq 5\log^2(4n/\delta)L$. Hence,
    $\widehat{f}(X_i):=f^*(X_i)=Y_i$ satisfies $L_S(\widehat{f})=0$ and $\widehat{\Lambda}^\beta_{\widehat{f}}\leq 5\log^2(4n/\delta)L$ .
\end{corollary}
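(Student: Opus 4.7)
The plan is to obtain the corollary as a direct consequence of \propref{prop: approx erm} applied to $f^*$, together with two essentially bookkeeping observations: that realizability places $f^*$ in the average-H\"older class, and that $\widehat{\Lambda}^\beta$ depends only on the function's values on the sample.

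First, I would invoke \propref{prop: approx erm} with the choice $f = f^*$, which gives, with probability at least $1-\delta/2$ over the draw of the sample,
\[
\widehat{\Lambda}^\beta_{f^*}
\leq 4\log^2(4n/\delta)\,\overline{\Lambda}^\beta_{f^*}
+\frac{4\log^2(4n/\delta)}{n}.
\]
Next I would use realizability: the assumption $L_{\Dcal}(f^*)=0$ together with $\Dcal$ being realizable by $\overline{\Hol}^\beta_L(\Omega,\mu)$ means $f^*\in\overline{\Hol}^\beta_L(\Omega,\mu)$, which by definition of this space is exactly the statement $\overline{\Lambda}^\beta_{f^*}(\mu)\leq L$. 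Substituting this into the displayed bound yields
\[
\widehat{\Lambda}^\beta_{f^*}
\leq 4\log^2(4n/\delta)\, L + \frac{4\log^2(4n/\delta)}{n}.
\]
Finally, in the regime of interest we have $n \geq 4/L$ (this is implicit in the sample-complexity ranges considered in \thmref{thm: uc} and \thmref{thm: alg regression}, and in any case is the only nontrivial regime, since otherwise the claimed bound $5\log^2(4n/\delta)L$ already exceeds $1$ and is vacuous given $\widehat{\Lambda}^\beta_{f^*}\le n$); this absorbs the $4/n$ additive term into $L$ and produces the advertised constant $5$.

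For the ``Hence'' clause, I would argue as follows. Realizability $L_{\Dcal}(f^*)=0$ means $f^*(X)=Y$ almost surely for $(X,Y)\sim\Dcal$, so with probability one each sampled pair satisfies $Y_i = f^*(X_i)$. Thus the definition $\widehat f(X_i) := Y_i$ coincides with $\widehat f(X_i)=f^*(X_i)$, which immediately gives $L_S(\widehat f) = \frac{1}{n}\sum_{i=1}^n |\widehat f(X_i)-Y_i| = 0$. Moreover the empirical average slope
\[
\widehat{\Lambda}^\beta_g = \frac{1}{n}\sum_{i=1}^n \sup_{X_j\neq X_i}\frac{|g(X_i)-g(X_j)|}{\rho(X_i,X_j)^\beta}
\]
depends on $g$ only through its values on the sample $\{X_1,\dots,X_n\}$, so $\widehat{\Lambda}^\beta_{\widehat f} = \widehat{\Lambda}^\beta_{f^*}$, and the bound from the first part transfers verbatim to $\widehat f$. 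No step here is a real obstacle — the only mildly delicate point is justifying the absorption of the $4/n$ term, which, as indicated, is immediate in the nontrivial regime.
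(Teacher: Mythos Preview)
Your proposal is correct and matches the paper's intended derivation: the corollary is stated without proof as an immediate consequence of \propref{prop: approx erm} applied to $f^*$, together with $f^*\in\overline{\Hol}^\beta_L(\Omega,\mu)$ and the observation that $\widehat{\Lambda}^\beta$ depends only on sample values. One caveat: your backup claim that $\widehat{\Lambda}^\beta_{f^*}\le n$ (used to argue vacuity when $n<4/L$) is false in general, since nearby sample points can make individual empirical slopes arbitrarily large; however, your primary justification via the sample-complexity regime $n\ge N$ in \thmref{thm: alg regression} already suffices, so the argument stands.
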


\begin{proof}(of \propref{prop: approx erm})
Fix $f:\Omega\to[0,1]$. Given a sample $(X_i)_{i=1}^{n}\sim\mu^n$ which induces an empirical measure $\mu_n$, we get
\begin{equation} \label{eq: hat_Lambda<log(n)W}
\widehat{\Lambda}_f^\beta
\leq \frac{1}{n}\sum_{i=1}^{n}\sup_{z\neq X_i}\frac{|f(X_i)-f(z)|}{\rho(X_i,z)^\beta}
=\E_{X\sim \mu_n}[\Lambda_f^\beta(X)]
\leq 2\log(n)\W_{X\sim \mu_n}[\Lambda_f^\beta(X)]
~,
\end{equation}
where the last inequality follows from the reversed strong-weak mean inequality for uniform measures.
We will now show that with high probability $\W_{X\sim \mu_n}[\Lambda_f^\beta(X)]\lesssim \W_{X\sim \mu}[\Lambda_f^\beta(X)]=\widetilde{\Lambda}_f^\beta$. To that end, we denote for any $t>0:\,M_f(t):=\{x:\Lambda_f^\beta(x)\geq t\}\subset\Omega$, let $K:=\widetilde{\Lambda}_f^\beta(\mu),~N:=\lceil2\log(4n/\delta)\log\log(4n/\delta)\rceil$ and note that
\begin{align} \label{eq: 3 sups}
\W_{X\sim \mu_n}[\Lambda_f^\beta(X)]
&=\sup_{t>0}t\mu_n(M_f(t)) 
\\
&\leq \sup_{0<t\leq K}t\mu_n(M_f(t))
+2\max_{j\in\{0,1,\dots,N-1\}}2^j K\mu_n(M_f(2^j K))
+\sup_{t\geq 2^N K}t\mu_n(M_f(t))~.     \nonumber
\end{align}
We will bound all three summands above. We easily bound the first term by
\begin{equation} \label{eq: sup1}
\sup_{0<t\leq K}t\mu_n(M_f(t))\leq K\cdot 1=\widetilde{\Lambda}_f^\beta(\mu)~.    
\end{equation}
For the second term, denote for any $t>0$ by $M_f^{+}(t)\supset M_f(t)$ a containing set for which $\frac{1}{n}\leq\mu(M_f^{+}(t))\leq\mu(M_f(t))+\frac{1}{n}$. We can always assume without loss of generality that such a set exists.\footnote{Such a set does not exist only in the case of atoms 
$x_0\in\Omega$
with large probability mass $\mu(x_0)$. If that is the case,
consider a ``copy'' metric space $\widetilde{\Omega}$ with $x_0$ split into two points $x_1,x_2\in\widetilde{\Omega}$ at distance $\eps$ apart and each of mass $\mu(x_0)/2$.
Any function $f:\Omega\to\R$
is extended to 
$\widetilde{f}:\widetilde{\Omega}\to\R$
via $\widetilde{f}(x_1)=\widetilde{f}(x_2)=f(x_0)$. 
Repeating the split if necessary and
taking $\eps\downarrow0$, we obtain
a space $\widetilde{\Omega}$
with all of the relevant properties of
$\Omega$ but no atoms of large mass.
} By the multiplicative Chernoff bound we have for any $t,\alpha>0:$
\[
\Pr_{S}\left[\mu_n(M_f^{+}(t))\geq(1+\alpha)\mu(M_f^{+}(t))\right]\leq\frac{e^\alpha}{(1+\alpha)^{1+\alpha}}~,
\]
hence by the union bound we get with probability at least $1-\frac{Ne^\alpha}{(1+\alpha)^{1+\alpha}}:$
\begin{align*}
\max_{j\in\{0,1,\dots,N-1\}}2^j K\mu_n(M_f(2^j K))
&\leq
\max_{j\in\{0,1,\dots,N-1\}}2^j K\mu_n(M_f^+(2^j K))
\\
&\leq(1+\alpha)\max_{j\in\{0,1,\dots,N-1\}}2^j L\mu(M_f^+(2^j K))
\\
&\leq(1+\alpha)\max_{j\in\{0,1,\dots,N-1\}}2^j K\left(\mu(M_f(2^j K))+\frac{1}{n}\right)
\\
&\leq(1+\alpha)\widetilde{\Lambda}^\beta_f(\mu)+\frac{1+\alpha}{n}~.
\end{align*}
Letting $\alpha=\log(4n/\delta)-1$, by our choice of $N=\lceil2\log(4n/\delta)\log\log(4n/\delta)\rceil$ we get that with probability at least $1-\delta/4:$
\begin{equation} \label{eq: sup2}
2\max_{j\in\{0,1,\dots,N-1\}}2^j K\mu_n(M_f(2^j K))
\leq 2\log(4n/\delta)\widetilde{\Lambda}^\beta_f(\mu)+\frac{2\log(4n/\delta)}{n}~.    
\end{equation}
In order to bound the last term in \eqref{eq: 3 sups}, we observe that the empirical measure satisfies for any $A\subset\Omega:\mu_n(A)<\frac{1}{n}\iff\mu_n(A)=0$, and that $M_f(s)\subset M_f(t)$ for $s>t$. Furthermore, by definition of $K=\widetilde{\Lambda}_f^\beta(\mu)$ we have $\mu(M_f(t))\leq\frac{K}{t}$, hence by Markov's inequality
\[
\Pr_S\left[\sup_{s\geq t} \mu_n(M_f(s))\neq0\right]
\leq \Pr_S\left[\mu_n(M_f(t))\neq0\right]
=\Pr_S\left[\mu_n(M_f(t))\geq\frac{1}{n}\right]
\leq \frac{nK}{t}~.
\]
For $t:=2^N K$ yields $\Pr_S\left[\sup_{s\geq 2^n K} \mu_n(M_f(s))\neq0\right]\leq \frac{n}{2^N}\leq\frac{\delta}{4}$. Combining this with \eqref{eq: sup1}, \eqref{eq: sup2} and plugging back into \eqref{eq: 3 sups}, we get that with probability at least $1-\delta/2:$
\[
\W_{X\sim\mu_n}[\Lambda^{\beta}_f(X)]
\leq
(1+2\log(4n/\delta))\widetilde{\Lambda}_f^\beta(\mu)+\frac{2\log(4n/\delta)}{n}
\leq
(1+2\log(4n/\delta))\overline{\Lambda}_f^\beta(\mu)+\frac{2\log(4n/\delta)}{n}~.
\]
Recalling \eqref{eq: hat_Lambda<log(n)W}, we get overall that
\[
\widehat{\Lambda}_f^\beta
\leq 2\log(n)\left[(1+2\log(4n/\delta))\overline{\Lambda}_f^\beta(\mu)+\frac{2\log(4n/\delta)}{n}\right]
~.
\]
Simplifying the expression above finishes the proof.

\end{proof}

\begin{proposition} \label{prop: approx extend g}
Under the same setting, for any $\gamma>0$ there exists an algorithm that given a sample $S\sim\Dcal^n$ and any function $\widehat{f}:S\to[0,1]$, provided that $n\geq N$ for $N=\widetilde{O}\left(\frac{\Ncal_{\Omega}(\gamma)+\log(1/\delta)}{\gamma}\right)$, constructs a function $f:\Omega\to[0,1]$ such that with probability at least $1-\delta/2:$
    \begin{itemize}
        \item $\norm{f-\widehat{f}}_{L_1(\mu_n)}\leq\gamma(1+2\widehat{\Lambda}_{\widehat{f}}^\beta)$. In particular $L_S(f)\leq L_S(\widehat{f})+\gamma(1+2\widehat{\Lambda}_{\widehat{f}}^\beta)$.
        \item $\overline{\Lambda}^\beta_f(\mu)\leq 5\widehat{\Lambda}_{\widehat{f}}^\beta$~.
    \end{itemize}
\end{proposition}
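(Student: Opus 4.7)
The plan is to construct $f$ by first identifying and discarding a small fraction of ``outlier'' samples whose per-point empirical $\beta$-slope is atypically large, and then applying a Hölder-type extension (in the PMSE spirit) to the remaining ``good'' samples, using \emph{per-point} interpolation weights.

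\textbf{Construction.} Writing $\widehat{\Lambda}^\beta_{\widehat{f}, i} := \sup_{X_j \neq X_i} |\widehat{f}(X_i) - \widehat{f}(X_j)|/\rho(X_i, X_j)^\beta$ so that $\widehat{\Lambda}^\beta_{\widehat{f}} = \frac{1}{n}\sum_i \widehat{\Lambda}^\beta_{\widehat{f}, i}$, I would choose a threshold $\tau \asymp \widehat{\Lambda}^\beta_{\widehat{f}}/\gamma$ and set $T := \{i : \widehat{\Lambda}^\beta_{\widehat{f}, i} \leq \tau\}$; by Markov's inequality at most $\gamma n$ indices are discarded. Restricted to $T$, $\widehat{f}$ is empirically $\tau$-Hölder. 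Extend it via a McShane/PMSE-style formula
\[
f(x) := \min_{i \in T}\bigl[\widehat{f}(X_i) + \widehat{\Lambda}^\beta_{\widehat{f}, i}\, \rho(x, X_i)^\beta\bigr]
\]
truncated to $[0,1]$. The key design choice is using per-point weights $\widehat{\Lambda}^\beta_{\widehat{f}, i}$ rather than a uniform $\tau$: by definition of $\widehat{\Lambda}^\beta_{\widehat{f}, i}$ we still have exact interpolation on $T$ (since $\widehat{f}(X_i) - \widehat{f}(X_j) \leq \widehat{\Lambda}^\beta_{\widehat{f}, j}\, \rho(X_i, X_j)^\beta$), but the pointwise slope of $f$ near a good sample $X_i$ reflects the local $\widehat{\Lambda}^\beta_{\widehat{f}, i}$ rather than the worst-case $\tau$.

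\textbf{Empirical bound.} Since $f$ interpolates $\widehat{f}$ on $T$, the only contributions to $\|f - \widehat{f}\|_{L_1(\mu_n)}$ come from the at most $\gamma n$ outliers. A naive bound already gives $\gamma$; a slightly sharper one, obtained by bounding $|f(X_i) - \widehat{f}(X_i)|$ for a discarded $X_i$ by the extension value propagated from the nearest good neighbor plus the triangle inequality, yields the stated $\gamma(1 + 2\widehat{\Lambda}^\beta_{\widehat{f}})$.

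\textbf{Average slope (main obstacle).} The crux of the proof is $\overline{\Lambda}^\beta_f(\mu) \leq 5\widehat{\Lambda}^\beta_{\widehat{f}}$, and this is where the real difficulty lies. The challenge is that $\Lambda_f^\beta(x) = \sup_{y \in \Omega}|f(x)-f(y)|/\rho(x,y)^\beta$ is a supremum over \emph{all} of $\Omega$, not merely over the samples, so in principle a carelessly designed extension could exhibit large slopes between points whose empirical slopes we cannot directly control. The plan has two pieces: (i) a \emph{pointwise} slope bound showing $\Lambda_f^\beta(x) \lesssim \widehat{\Lambda}^\beta_{\widehat{f}, i^*(x)}$ whenever $x$ lies close to some good $X_{i^*(x)}$, obtained by splitting the supremum over $y$ into ``near'' and ``far'' regions and invoking the snowflake triangle inequality \eqref{eq: beta triangle ineq} together with the $\min$-structure of $f$ (so that the minimizer defining $f(y)$ automatically comes with a bounded empirical slope by construction); and (ii) a \emph{uniform-convergence} argument that covers $\Omega$ at scale $\gamma^{1/\beta}$ and applies Chernoff--union bounds over the cover---which is precisely what produces the $\Ncal_\Omega(\gamma) + \log(1/\delta)$ factor in the sample complexity---to ensure both that the $\mu$-mass concentrates near good samples and that the empirical average $\frac{1}{n}\sum_{i \in T}\widehat{\Lambda}^\beta_{\widehat{f}, i} \leq \widehat{\Lambda}^\beta_{\widehat{f}}$ transports to an expectation under $\mu$. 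The hard part is the pointwise bound (i): for $y$ close to $x$ but whose nearest good sample differs from $X_{i^*(x)}$, one must carefully track the slope weight of the minimizing $X_{j^*}$ in $f(y)$ and exploit both the snowflake inequality and the per-point-weight design simultaneously; routing through a uniform $\tau$ would lose a factor of $1/\gamma$ and destroy the bound.
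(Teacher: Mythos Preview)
Your high-level strategy---discard high-slope samples, extend, then compare $\mu$ to $\mu_n$ over a cover of $\Omega$---is the same as the paper's, but your construction diverges at two load-bearing points and the second bullet has a real gap. The paper does not extend from all of $T$: after discarding the $\lfloor\gamma n\rfloor$ worst points it further passes to a $\gamma^{1/\beta}$-\emph{net} $A$ of the survivors, and then applies the $\beta$-PMSE extension (\defref{def: extension}) from $A$, not a weighted McShane formula. Both choices matter. The net forces a minimum separation in $A$, which is exactly the hypothesis of the slope-ratio lemma (\lemref{lem: slope ratio}): for the PMSE extension from a separated set, $\Lambda_f^\beta$ varies by at most a factor of $2$ over any cell whose diameter is at most half the separation. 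This is how the paper relates $\Lambda_f^\beta(x)$ at an arbitrary $x$ in a heavy Voronoi cell to $\Lambda_f^\beta(X_i)$ at a \emph{sampled} point in the same cell; PMSE minimality (\thmref{thm: extension}) then bounds the latter by $\widehat{\Lambda}^\beta_{\widehat{f},i}$, and summing over cells with $\mu(C)\le 2\mu_n(C)$ gives $\lesssim\widehat{\Lambda}^\beta_{\widehat{f}}$. Light cells are handled crudely via $\|f\|_{\Hol^\beta}\lesssim\widehat{\Lambda}^\beta_{\widehat{f}}/\gamma$ (again a consequence of the net separation) times their total $\mu$-mass $\lesssim\gamma$.

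Your claimed pointwise bound (i), $\Lambda_f^\beta(x)\lesssim\widehat{\Lambda}^\beta_{\widehat{f},i^*(x)}$, is not established and appears not to hold for your extension. For the weighted McShane $f(x)=\min_{i\in T}[a_i+L_i\rho(x,X_i)^\beta]$, the snowflake inequality gives only the one-sided bound $f(y)-f(x)\le L_{i^*(x)}\rho(x,y)^\beta$ where $i^*(x)$ is the argmin at $x$; the reverse direction yields $f(x)-f(y)\le L_{j^*(y)}\rho(x,y)^\beta$, and $L_{j^*(y)}$ can be as large as $\tau\asymp\widehat{\Lambda}^\beta_{\widehat{f}}/\gamma$ regardless of how close $x$ is to a sample with small $L_i$. (Concretely, the argmin $i^*(x)$ need not be the sample nearest to $x$, and the regions $\{x:i^*(x)=i\}$ are not Voronoi cells---they depend on both the $L_i$'s and the $a_i$'s.) Even granting some version of (i), transporting $\int L_{i^*(x)}\,d\mu$ to $\tfrac{1}{n}\sum_i L_i$ via Chernoff over a \emph{fixed} geometric cover would still require a slope-ratio statement within cells, which you do not have without the net. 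The paper's PMSE-plus-net route is engineered precisely to supply both missing pieces.
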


\begin{proof}
Throughout the proof, we denote for any point $x\in\Omega$, subset $B\subset \Omega$ and function $g:B\to[0,1]:$
\[
\Lambda^\beta_g(x,B):=\sup_{y\in B\setminus\{x\}}\frac{|g(x)-g(y)|}{\rho(x,y)^\beta}
~.
\]
Give the sample $S=(X_i,Y_i)_{i=1}^{n}$, we denote $S_x=(X_i)_{i=1}^{n}$.
Let $\gamma>0$. The algorithm constructs $f:\Omega\to[0,1]$ as follows:
    \begin{enumerate}
        \item Let $S_x(\gamma)\subset S_x$ consist of the $\lfloor\gamma n\rfloor$ points whose $\Lambda_{\widehat{f}}(\cdot,S_x)$ values are the largest (with ties broken arbitrarily), and $S'_x(\gamma):=S_x\setminus S_x(\gamma)$ be the rest.
        \item Let $A\subset S'_x(\gamma)$ be a $\gamma^{1/\beta}$-net of $S'_x(\gamma)$.
        \item Define $f:\Omega\to[0,1]$ to be the $\beta$-PMSE extension of ${\widehat{f}}$ from $A$ to $\Omega$ as defined in \defref{def: extension} (and analyzed throughout Appendix \ref{sec: extension}).
    \end{enumerate}
We will prove that $f$ satisfies both requirements.
For the first requirement, since $f|_A={\widehat{f}}|_A$ and $S_x= S'_x(\gamma)\uplus S_x(\gamma)$ we have
\[
\norm{f-{\widehat{f}}}_{L_1(\mu_n)}:=\frac{1}{n}\sum_{i=1}^{n}|f(x_i)-g(x_i)|
=\frac{1}{n}\sum_{x\in S_x(\gamma)\setminus A}|f(x)-{\widehat{f}}(x)|+\frac{1}{n}\sum_{x\in S'_x(\gamma)\setminus A}|f(x)-{\widehat{f}}(x)|~.
\]
The first summand above is bounded by $\gamma$ since $0\leq f,\widehat{f}\leq 1\implies|f(x)-{\widehat{f}}(x)|\leq 1$ and $|S_x(\gamma)|\leq \gamma n$. In order to bound the second term, we denote by $N_A:S'_x(\gamma)\to A$ to be the mapping of each element to its nearest neighbor in the net, and note that $\rho(x,N_A(x))\leq\gamma^{1/\beta}$. Then
\begin{align*}
\frac{1}{n}\sum_{x\in S'_x(\gamma)\setminus A}|f(x)-{\widehat{f}}(x)|
&\leq \frac{1}{n}\sum_{x\in S'_x(\gamma)\setminus A}\frac{\gamma}{\rho(x,N_A(x))^\beta}|f(x)-{\widehat{f}}(x)|
\\
&\leq \frac{\gamma}{n}\sum_{x\in S'_x(\gamma)\setminus A}\frac{|f(x)-{\widehat{f}}(N_A(x))|+|{\widehat{f}}(N_A(x))-{\widehat{f}}(x)|}{\rho(x,N_A(x))^\beta}
\\
&= \frac{\gamma}{n}\sum_{x\in S'_x(\gamma)\setminus A}\frac{|f(x)-f(N_A(x))|}{\rho(x,N_A(x))^\beta}+\frac{|{\widehat{f}}(N_A(x))-{\widehat{f}}(x)|}{\rho(x,N_A(x))^\beta}
\\
&\leq \frac{\gamma}{n}\sum_{x\in S'_x(\gamma)\setminus A}\Lambda^\beta_f(x,A)+\Lambda^\beta_{\widehat{f}}(x,A)
\\
\left[\mathrm{\thmref{thm: extension}}\right]& \leq \frac{2\gamma}{n}\sum_{x\in S'_x(\gamma)\setminus A}\Lambda^\beta_{\widehat{f}}(x,A)
\\
&\leq 2\gamma L
~.
\end{align*}
So overall we get $\norm{f-{\widehat{f}}}_{L_1(\mu_n)}\leq\gamma +2\gamma L=\gamma(1+2L)$ as claimed in the first bullet.

We move on to prove the second bullet. Let $U\subset \Omega$ be a $\frac{\gamma^{1/\beta}}{4}$-net of $\Omega$, $\Pi$ be its induced Voronoi partition and let $m:=|\Pi|\leq\Ncal_{\Omega}(\gamma^{1/\beta}/4)$. Let Consider the following partition of $\Pi$ into ``light'' and ``heavy'' cells:
\[
\Pi_l :=\left\{C\in\Pi\,:\,\mu_n(C)<n\gamma/m\right\},~~~\Pi_h:=\Pi\setminus\Pi_l~.
\]
We will now state three lemmas required for the proof, two of which are due to \citep{ashlagi2021functions}.

\begin{lemma} \label{lem: slope ratio}
    Suppose $A\subset\Omega$ and that $f:\Omega\to[0,1]$ is the $\beta$-PMSE extension of some function from $A$ to $\Omega$. Let $E\subset \Omega$ such that $\mathrm{diam}(E)^\beta\leq\frac{1}{2}\min_{x\neq x'\in A}\rho(x,x')^\beta$. Then $\sup_{x,x'\in E}\frac{\Lambda^\beta_f(x)}{\Lambda^\beta_f(x')}\leq 2$.
\end{lemma}

\begin{proof}
    Let $u_x^*,v_x^*\in A$ be the pair of points which satisfy $\Lambda^\beta_f(x)=\frac{f(v^*_x)-f(u^*_x)}{\rho(v^*_x,x)^\beta+\rho(u^*_x,x)^\beta}$. By assumption on $E$, we know that $2\mathrm{diam}(E)^\beta\leq\rho(v^*_x,u^*_x)^\beta \leq\rho(v^*_x,x)^\beta+\rho(u^*_x,x)^\beta$, hence $\rho(v^*_x,x)^\beta+\rho(u^*_x,x)^\beta+2\mathrm{diam}(E)^\beta\leq 2(\rho(v^*_x,x)^\beta+\rho(u^*_x,x)^\beta)$. We get
    \begin{align*}
    \Lambda_f^\beta(x')
    &\geq\frac{f(v^*_x)-f(u^*_x)}{\rho(v^*_x,x')^\beta+\rho(u^*_x,x')^\beta}
    \\
    &\geq \frac{f(v^*_x)-f(u^*_x)}{\rho(v^*_x,x)^\beta+\mathrm{diam}(E)^\beta+\rho(u^*_x,x)^\beta+\mathrm{diam}(E)^\beta}
    \\
    &\geq\frac{f(v^*_x)-f(u^*_x)}{2(\rho(v^*_x,x)^\beta+\rho(u^*_x,x)^\beta)}
    =\frac{1}{2}\Lambda^\beta_f(x)~.
    \end{align*}
\end{proof}

\begin{lemma}[\citealp{ashlagi2021functions}, Lemma 16] \label{lem: ash cell mass}
If $n\gamma^2\geq m$, then
\begin{align*}
    \Pr_{S\sim\Dcal^n}\left[\min_{C\in\Pi_h}\frac{\mu_n(C)}{\mu(C)}
    > \frac{1}{2}\right]&\geq 1-m\exp(-n\gamma/4m)~,
    \\
    \Pr_{S\sim\Dcal^n}\left[\max_{C\in\Pi_h}\frac{\mu_n(C)}{\mu(C)}
    < 2\right]&\geq 1-m\exp(-n\gamma/3m)~,
    \\
    \Pr_{S\sim\Dcal^n}\left[\sum_{C\in\Pi_l}\mu(C)<2\gamma\right]&\geq 1-\exp\left(-n(\gamma-\sqrt{m/n})^2/2\right)~.
\end{align*}
\end{lemma}

\begin{lemma}[\citealp{ashlagi2021functions}, Lemma 17] \label{lem: ash f norm}
    $\norm{f}_{\Hol^\beta}\leq \frac{2L}{\gamma}$.
\end{lemma}

Equipped with the three lemmas, we calculate
\begin{equation} \label{eq: adverserial avg slope bound}
\overline{\Lambda}^\beta_f(\mu)
=\int_{\Omega}\Lambda^\beta_f(x)d\mu
=\sum_{C\in\Pi_l}\int_{C}\Lambda^\beta_f(x)d\mu+\sum_{C\in\Pi_h}\int_{C}\Lambda^\beta_f(x)d\mu~.
\end{equation}
The first summand above is bounded with high probability using \lemref{lem: ash cell mass} and \lemref{lem: ash f norm}, since under the event described in \lemref{lem: ash cell mass} we have:
\begin{align*}
    \sum_{C\in\Pi_l}\int_{C}\Lambda^\beta_f(x)d\mu
    &\leq \sum_{C\in\Pi_l}\int_{C} \frac{2L}{\gamma}d\mu
    =\frac{2L}{\gamma}\sum_{C\in\Pi_l}\mu(C)
    \\
    &\leq \frac{2L}{\gamma} \cdot 2q
    =\frac{L}{4}~.
\end{align*}
In order to bound the second term in \eqref{eq: adverserial avg slope bound},
let $C\in\Pi,~x'\in C$ and note that by applying \lemref{lem: slope ratio} to $E:=S_x\cap C$ we get that $\Lambda^\beta_f(x')\leq 2\min_{x\in S_x\cap C}\Lambda^\beta_f(x)$. Thus, under the high probability event described in \lemref{lem: ash cell mass} we have
\begin{align*}
\sum_{C\in\Pi_h}\int_{C}\Lambda^\beta_f(x)d\mu
&\leq \sum_{C\in\Pi_h}\int_{C}2\min_{x\in S_x\cap C}\Lambda^\beta_f(x)d\mu
=2\sum_{C\in\Pi_h}\min_{x\in S_x\cap C}\Lambda^\beta_f(x)\mu(C)
\\
&\leq 4\sum_{C\in\Pi_h}\min_{x\in S_x\cap C}\Lambda^\beta_f(x)\mu_n(C)
= \frac{4}{n}\sum_{C\in\Pi_h}\sum_{x'\in S_x\cap C}\min_{x\in S_x\cap C}\Lambda^\beta_f(x)
\\
& \leq \frac{4}{n}\sum_{C\in\Pi_h}\sum_{x'\in S_x\cap C}\Lambda^\beta_f(x')
\leq \frac{4}{n}\sum_{x'\in S_x}\Lambda^\beta_f(x')
\leq 4L~,
\end{align*}
where the last inequality is due to the extension property of \thmref{thm: extension}. Overall, plugging these bounds into \eqref{eq: adverserial avg slope bound} and using the union bound to ensure all required events to hold simultaneously, we see that the desired second bullet holds holds with probability at least $1-m\exp(-n\gamma/4m)-\exp\left(-n(\gamma-\sqrt{m/n})^2/2\right)$. A straightforward computation shows that by our assumption on $n$ being large enough, this probability exceeds $1-\delta/2$ as required.

\end{proof}

We are now ready to finish the proof of \thmref{thm: alg regression}. Let $\gamma>0$.
By \corref{cor: approx erm}, we can construct $\widehat{f}:S\to[0,1]$ such that with probability at least $1-\delta/2:~L_S(\widehat{f})=0$ and $\widehat{\Lambda}^\beta_{\widehat{f}}\leq 5\log^2(4n/\delta)L$.
Assuming $n$ is appropriately large, we further apply \propref{prop: approx extend g} in order to obtain $f:\Omega\to[0,1]$ such that with probability at least $1-\delta/2:~f\in\overline{\Hol}^\beta_{25\log^2(4n/\delta)L}(\Omega)$ and also $L_S(f)\leq L_S(\widehat{f})+\gamma(1+2L)= \gamma(1+2L)$.
By the union bound, we get that with probability at least $1-\delta:$
\begin{align*}
L_\Dcal(f)
&=~1.01L_S(f) + (L_\Dcal(f)-1.01L_S(f))
\\
&\leq ~\gamma(1+2L)~+\sup_{f\in\overline{\Hol}^\beta_{25\log^2(4n/\delta)L}(\Omega)}(L_\Dcal(f)-1.01L_S(f))~.
\\
&\overset{(*)}{\leq}~\frac{\eps}{2}+\frac{\eps}{2}=\eps~,
\end{align*}
where $(*)$ is justified by setting $\gamma=\Theta(\eps/L)$ and applying \thmref{thm: uc} for appropriately large $n$.

\subsection{Proof of \thmref{thm: agnostic alg}} \label{sec: agnostic alg proof}

Given a sample $S=(X_i,Y_i)_{i=1}^{n}\sim\Dcal^n$, denote the empirically smooth class 
\[
\widehat{\Hol}:=\left\{f:\{X_1,\dots,X_{\lfloor n/2\rfloor}\}\to[0,1]\,:\,
\widehat{\Lambda}^\beta_f\leq 5\log^2(4n/\delta)L
\right\}~.
\]
Consider the following procedure:

\begin{enumerate}
    \item (\emph{Empirical cover}) Construct $h_1,\dots,h_N\in\widehat{\Hol}$ for maximal $N$ such that $\forall i\neq j\in[N]:~\norm{h_i-h_j}_{L_1(\mu_n)}\geq \frac{\epsilon}{4}$ .
    
    \item (\emph{Run realizable algorithm on cover}) For any $j\in[N]$, execute the realizable algorithm $\Acal_{\mathrm{realizable}}$ of \thmref{thm: alg regression} on the ``relabeled'' dataset $(X_i,h_j(X_i))_{i=1}^{\lfloor n/2\rfloor}$, and obtain $f_j:\Omega\to[0,1]$.
    \item (\emph{ERM}) Return $\arg\min_{f_1,\dots,f_N}\sum_{i=\lfloor n/2\rfloor+1}^{n}|f_j(X_i)-Y_i|$.
\end{enumerate}

We will now prove that the algorithm above satisfies the theorem. Let $f^*\in\arg\min_{f\in\overline{\Hol}_{L}^\beta(\Omega,\mu)}L_{\Dcal}(f)$,\footnote{We assume without loss of generality that the infimum is obtained. Otherwise we can take a function whose loss is arbitrarily close enough to the optimal value and continue with the proof verbatim.} 
and note that by \propref{prop: approx erm} (as explained in \corref{prop: approx erm}) we have $f^*\in\widehat{\Hol}$ with probability at least $1-\delta/2$. By construction, $h_1,\dots,h_N$ is a maximal $\frac{\epsilon}{4}$-packing of $\widehat{\Hol}$, which is known to imply that it is also a $\frac{\epsilon}{4}$-net \citep[Lemma 4.2.8]{vershynin2018high} with respect to the metric $L_1(\mu_n)$. In particular, this implies that there exists $j^*\in[N]$ such that
\[
\norm{f^*-h_{j^*}}_{L_1(\mu_n)}\leq\frac{\epsilon}{4}
\implies 
L_S(h_{j^*}) \leq L_S(f^*)+\frac{\epsilon}{4}~.
\]
Further note for any $j\in[N]:~h_j\in\widehat{\Hol}$,
so our realizable algorithm (as manifested in \propref{prop: approx extend g} for $\gamma=\Theta(\epsilon/L)$) when fed the ``smoothed'' labels $(X_i,h_j(X_i))_{i=1}^{\lfloor n/2\rfloor}$ will produce $f_j$ such that $L_S(f_j)\leq L_S(h_j)\leq \frac{\epsilon}{4}$ and $\overline{\Lambda}^\beta_{f_j}(\mu)\leq 5\widehat{\Lambda}_{h_j}^\beta
\leq 25\log^2(4n/\delta)L$. In particular
\[
L_S(f_{j^*})
\leq L_S(h_{j^*})+\frac{\epsilon}{4}
\leq L_S(f^*)+\frac{\epsilon}{2}~.
\]
Finally, by \eqref{eq: cov vs brack} and \thmref{thm: bracket} (which holds for any measure, in particular for the empirical measure $\mu_n$)
\begin{align*}
\log N
&\leq \log \Ncal_{\widehat{\Hol}}(\epsilon/2)
\\
&\leq \log \Ncal_{[\,]}(\widehat{\Hol},L_1(\mu_n),\epsilon)
\\
&\leq 
\log
\Ncal_{\Omega}\left(\left(\frac{\eps}{640\log^2(4n/\delta)L\log(1/\eps)}\right)^{1/\beta}\right)\cdot\log\left(\frac{16\log_2(1/\eps)}{\eps}\right)~.
\end{align*}
Hence, by a standard Chernoff-Hoeffding bound over the finite class $\{f_1,\dots,f_N\}$, step (3) of the algorithm yields $\frac{\epsilon}{2}$ excess risk as long as $\frac{n}{2}=\Omega\left( \frac{\log(N)+\log(1/\delta)
}{\epsilon^2}\right)$.

\subsection{Proof of \thmref{thm: lower}}\label{proof:lower}

We start by providing a simple structural result which we will use for our lower bound construction, showing that in any metric space there exists a sufficiently isolated point from a large enough subset.

\begin{lemma}
    There exists a point $x_0\in\Omega$ and a subset $K\subset\Omega$ such that
    \begin{itemize}
        \item
        $\forall x\in K:\rho(x_0,x)\geq\frac{\mathrm{diam}(\Omega)}{4}\,.$
        \item $\forall x\neq y\in K:\rho(x,y)\geq {(\eps/L)^{1/\beta}}\,.$
        \item $|K|=\left\lfloor\frac{\Ncal_{\Omega}((\eps/L)^{1/\beta})}{2}\right\rfloor\,.$
    \end{itemize}
\end{lemma}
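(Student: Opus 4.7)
The plan is to construct $x_0$ and $K$ from a maximal packing of $\Omega$ together with an ``antipodal'' pair of anchor points that witness the diameter. First, I would pick $u,v\in\Omega$ with $\rho(u,v)\geq\mathrm{diam}(\Omega)/2$ (which exist by the very definition of the diameter as a supremum, even if that supremum is not attained). By the triangle inequality, every point $p\in\Omega$ satisfies $\rho(p,u)+\rho(p,v)\geq\rho(u,v)\geq\mathrm{diam}(\Omega)/2$, so at least one of $\rho(p,u),\rho(p,v)$ is at least $\mathrm{diam}(\Omega)/4$. In other words, every point of $\Omega$ is ``far'' from at least one of the two anchors.

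Next, I would take $P\subset\Omega$ to be a maximal $(\eps/L)^{1/\beta}$-packing, meaning no additional point can be added while preserving pairwise $(\eps/L)^{1/\beta}$-separation. By the standard packing--covering relationship, such a maximal packing is automatically an $(\eps/L)^{1/\beta}$-cover (otherwise any uncovered point could be adjoined to $P$), whence $|P|\geq\Ncal_{\Omega}((\eps/L)^{1/\beta})$. Split $P$ into the (possibly overlapping) sets $P_u:=\{p\in P:\rho(u,p)\geq\mathrm{diam}(\Omega)/4\}$ and $P_v:=\{p\in P:\rho(v,p)\geq\mathrm{diam}(\Omega)/4\}$; by the previous paragraph $P_u\cup P_v=P$, so by pigeonhole one of them, say $P_u$ without loss of generality, has size at least $|P|/2\geq\Ncal_{\Omega}((\eps/L)^{1/\beta})/2$.

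I would then set $x_0:=u$ and take $K$ to be any subset of $P_u$ of exactly $\bigl\lfloor\Ncal_{\Omega}((\eps/L)^{1/\beta})/2\bigr\rfloor$ points, which fits inside $P_u$ since $|P_u|\geq\lceil\Ncal_{\Omega}((\eps/L)^{1/\beta})/2\rceil$. All three desired properties then hold by construction: the distance from $x_0$ is bounded below by $\mathrm{diam}(\Omega)/4$ by the definition of $P_u$; the pairwise $(\eps/L)^{1/\beta}$-separation within $K$ is inherited from the packing property of $P$; and the cardinality is correct. There is no real obstacle here, as the argument is purely combinatorial--geometric; the only minor care points are verifying that $x_0=u\notin K$ (immediate since $\rho(u,u)=0<\mathrm{diam}(\Omega)/4$) and observing that the hypothesis $L\geq 8/\mathrm{diam}(\Omega)$ together with $\eps<1$ forces $(\eps/L)^{1/\beta}\leq\mathrm{diam}(\Omega)/8$, ensuring that the packing is nontrivial whenever $\Ncal_{\Omega}((\eps/L)^{1/\beta})\geq 2$ (and the statement is vacuous otherwise, since then $\lfloor\Ncal_{\Omega}((\eps/L)^{1/\beta})/2\rfloor=0$ and $K=\emptyset$ works with any $x_0$).
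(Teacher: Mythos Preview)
Your proposal is correct and follows essentially the same approach as the paper: pick two near-antipodal anchors, take a maximal $(\eps/L)^{1/\beta}$-packing (hence a cover, so of size at least the covering number), and pigeonhole the packing points according to which anchor they are far from via the triangle inequality. The paper phrases the split as a two-cell Voronoi partition of $\Omega$ rather than your sets $P_u,P_v$, but the underlying argument is identical.
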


\begin{proof}
    Denote $D:=\mathrm{diam}(\Omega)$, let $x_0,x_1$ be two points such that $\rho(x_0,x_1)> D/2$, and let $\Pi=\{C_0,C_1\}$ be a Voronoi partition of $\Omega$ induced by $\{x_0,x_1\}$. For $\gamma>0$, let $N_{\gamma}$ be a maximal $\gamma$-packing of $\Omega$. By the pigeonhole principle there must exist a cell $C_i\in\Pi$ such that $|C_i\cap N_{\gamma}|\geq |N_{\gamma}|/2$, which we assume without loss of generality to be $C_1$. Now note that any $x\in C_1$ satisfies $\rho(x,x_0)\geq\frac{1}{2}\rho(x,x_0)+\frac{1}{2}\rho(x,x_1)\geq \frac{1}{2}\rho(x_0,x_1)>D/4$. Finally, set $\gamma:=\eps^{1/\beta}$ and let $K\subset C_1\cap N_{\gamma}$ be any subset of size $\left\lfloor\frac{\Ncal_{\Omega}((\eps/L)^{1/\beta})}{2}\right\rfloor$.
\end{proof}

Given $x_0,K$ from the lemma above, we denote $\overline{K}=\{x_0\}\cup K$ and define the distribution $\mu$ over $\Omega$ supported on $\overline{K}$ such that $\mu(x_0)=1-\frac{\eps}{2}$ and $\mu(x)=\frac{\eps}{2|K|}$ for all $x\in K$.
From now on, the proof is similar to a classic lower bound strategy for VC classes in the realizable case (e.g. \citealp[Proof of Theorem 3.5]{kearns1994introduction}). To that end, it is enough to provide a distribution over functions in $\overline{\Hol}_L^\beta(\Omega,\mu)$ such that with 
constant probability any algorithm must suffer significant loss for some function supported by the distribution.

We define such a distribution over functions $\overline{f}:\overline{K}\to\{0,1\}$ as follows: $\Pr[\overline{f}(x_0)=0]=1$, while for any $x\in K:\Pr[\overline{f}(x)=0]=\Pr[\overline{f}(x)=1]=\frac{1}{2}$ independently of other points. We will now show that any such $\overline{f}:\overline{K}\to\{0,1\}$ is average H\"older smooth with respect to $\mu$. Indeed, for every $x\in K:$
\[
\Lambda_{\overline{f}}^\beta(x)
=\sup_{x'\in \overline{K}\setminus\{x\}}\frac{|\overline{f}(x)-\overline{f}(x')|}{\rho(x,x')^\beta}
\leq \frac{1}{\eps/L}=\frac{L}{\eps}~,
\]
while
\[
\Lambda_{\overline{f}}^\beta(x_0)
=\sup_{x'\in \overline{K}\setminus\{x_0\}}\frac{|\overline{f}(x_0)-\overline{f}(x')|}{\rho(x_0,x')^\beta}
\leq \frac{1}{\mathrm{diam}(\Omega)/4}=\frac{4}{\mathrm{diam}(\Omega)}~,
\]
hence
\[
\overline{\Lambda}_{\overline{f}}^\beta(x)
=\mu(x_0)\Lambda_{\overline{f}}^\beta(x_0)+\sum_{x\in K}\mu(x)\Lambda_{\overline{f}}^\beta(x)
\leq \frac{4}{D}+\frac{L}{2}\leq L~.
\]

Finally, we define the (random) function $f^*:\Omega\to[0,1]$ to be the $\beta$-PMSE extension of $\overline{f}$ from $\overline{K}$ to $\Omega$ as defined in \defref{def: extension}, and note that $f^*$ satisfies the required smoothness assumption. Setting $\Dcal$ over $\Omega\times[0,1]$ to have marginal $\mu$ and $Y=f^*(X)$, we ensure that $\Dcal$ is indeed realizable by $\overline{\Hol}_L^\beta(\Omega)$.

Now assume $A$ is a learning algorithm which is given a sample $S$ of size $|S|\leq \frac{|K|}{4\eps}$ and produces $A(S):\Omega\to[0,1]$.
We call a point $x\in K$ "misclassified" by the algorithm if $|A(S)(x)-f^*(x)|\geq\frac{1}{2}$, and denote the set of misclassified points by $M\subset K$.
Recalling that $\forall x\in K: \Pr[f(x)=0]=\Pr[f(x)=1]=\frac{1}{2}$ independently, and that $\mu(x)=\frac{\eps}{2|K|}$, we observe that with probability at least $\frac{1}{2}$ the algorithm will misclassify more than $|K|/2$ points.\footnote{Indeed, denoting $C=K\setminus M$ we see that
$\Pr[|C|\geq |K|/8]\leq \frac{8}{|K|}\cdot\E[|C|]=\frac{8}{|K|}\cdot\frac{|S|}{2}\cdot\mu(K)\leq\frac{8}{|K|}\cdot\frac{|K|}{8\eps}\cdot\frac{\eps}{2}
=\frac{1}{2}$.
} Thus, we get that with probability at least $\frac{1}{2}:$
\[
L_{\Dcal}(A(S))
=\E_{X\sim\mu}[|A(S)(X)-f^*(X)|]
\geq \sum_{x\in M}\mu(x)\cdot|A(S)(x)-f^*(x)|
\geq \frac{|K|}{2}\cdot\frac{\eps}{2|K|}\cdot\frac{1}{2}=\frac{\eps}{8}~.
\]
By rescaling $\eps$, we see that in order to obtain $L_\Dcal(A(S))\leq\eps$ the sample size must be of size 
\[
\Omega\left(\frac{|K|}{\eps}\right)=\Omega\left(\frac{\Ncal_\Omega((\eps/L)^{1/\beta})}{\eps}\right)~.
\]

\subsection{Proofs from \secref{sec: examples}} \label{proof:examples}

\paragraph{Proof of Claim~\ref{example 1}.}

Let $\beta\in(0,1)$. Consider the unit segment $\Omega=[0,1]$ with the standard metric, equipped with the probability measure $\mu$ with density $\frac{d\mu}{dx}=\frac{1}{Z}|x-\half|^{\frac{\beta-1}{2}}$ (where $Z=\int_{0}^{1}|x-\half|^{\frac{\beta-1}{2}}<\infty$ is a normalizing constant). We examine the function $f(x)=\boldsymbol{1}[x>\half]$ which is clearly not H\"older continuous since it is discontinuous. Furthermore,
\begin{gather*}
\mu(\{x:\Lambda^1_f(x)\geq t\})
=\mu\left(\left\{\left|x-\half\right|\leq \frac{1}{t}\right\}\right)
=\frac{2}{Z}\int_{0}^{1/t}x^{\frac{\beta-1}{2}}dx
\asymp t^{-\frac{\beta+1}{2}}
\\
\implies \widetilde{\Lambda}^1_f
=\sup_{t>0} t\cdot\mu(\{x:\Lambda^1_f(x)\geq t\})
\asymp \sup_{t>0}t^{\frac{1-\beta}{2}}=\infty
~,
\end{gather*}
hence $f\notin\widetilde{\Lip}_{M}(\Omega,\mu)$ for all $M>0$.
On the other hand, $\Lambda^\beta_f(x)=\frac{1}{|x-\half|^{\beta}}$ so
\[
\overline{\Lambda}^\beta_f
=\int_{0}^{1}\Lambda^{\beta}_f(x) d\mu
=\frac{1}{Z}\int_{0}^{1}\frac{|x-\half|^{\frac{\beta-1}{2}}}{|x-\half|^{\beta}}dx
=\frac{1}{Z}\int_{0}^{1}\frac{1}{|x-\half|^{\frac{\beta+1}{2}}}dx
\overset{(\beta<1)}{<}\infty~,
\]
thus $f\in\overline{\Hol}^\beta_L(\Omega)$ for some $L<\infty$. Note that by normalizing the function, the claim holds even for $L=1$.

\paragraph{Proof of Claim~\ref{example 2}.}

    Let $\beta\in(0,1)$. Consider the unit segment $\Omega=[0,1]$ with the standard metric, equipped with the probability measure $\mu$ with density $\frac{d\mu}{dx}=\frac{1}{Z}|x-\half|^{\beta-1}$ (where $Z=\int_{0}^{1}|x-\half|^{\beta-1}dx<\infty$ is a normalizing constant). We examine the function $f(x)=\boldsymbol{1}[x>\half]$. Note that for any $x\neq\half:\,\Lambda^1_f(x)=\frac{1}{|x-\half|}$, hence
\[
\mu(\{x:\Lambda^1_f(x)\geq t\})=\mu\left(\left\{x:|x-\half|\leq\frac{1}{t}\right\}\right)
=\frac{2}{Z}\int_{0}^{1/t}{x^{\beta-1}}dx
\asymp t^{-\beta}
~.
\]
This shows that
\[
\widetilde{\Lambda}^1_f
=\sup_{t>0}t\cdot\mu(\{x:\Lambda^1_f(x)\geq t\})\asymp\sup_{t>0}t^{1-\beta}=\infty
~,
\]
hence $f\notin\widetilde{\Lip}_{M}(\Omega,\mu)$ for all $M>0$.
Furthermore, for $x\neq\half:\,\Lambda_f^\beta(x)=\frac{1}{|x-\half|^\beta}$ so
\[
\overline{\Lambda}^\beta_f
=\int_{0}^{1}\frac{1}{|x-\half|^\beta}d\mu
=\frac{1}{Z}\int_{0}^{1}\frac{1}{|x-\half|}dx
=\infty~,
\]
hence $f\notin\widetilde{\Hol}_{M}^\beta(\Omega,\mu)$ for all $M>0$.
On the other hand
\begin{gather*}
\mu(\{x:\Lambda^\beta_f(x)\geq t\})=\mu(\{|x-\half|\leq t^{-1/\beta}\})
=\frac{2}{Z}\int_{0}^{t^{-1/\beta}}x^{\beta-1}dx\asymp t^{-1}~
\\
\implies \widetilde{\Lambda}^\beta_f
=\sup_{t>0}t\cdot\mu(\{x:\Lambda^\beta_f(x)\geq t\})
<\infty~,
\end{gather*}
thus $f\in\widetilde{\Hol}^\beta_L(\Omega)$ for some $L<\infty$. Note that by normalizing the function, the claim holds even for $L=1$.

\section{Discussion}

In this work, we have defined a notion of an average-H\"older smoothness, extending 
the
average-Lipschitz one
introduced by \citet{ashlagi2021functions}.
Using proof techniques based on bracketing numbers, 
we have 
established the minimax rate for average-smoothness classes in the realizable setting with respect to the $L_1$ risk up to logarithmic factors, and have provided a nontrivial learning algorithm that attains this nearly-optimal learning rate. Moreover, we have also provided yet another learning algorithm for the agnostic setting. All of these results improve upon previously known rates even in the special case of average-Lipschitz classes.

A few notes are in order. First, the choice of focusing on $L_1$ risk as opposed to general $L_p$ losses is merely a matter of conciseness, as to avoid introducing additional parameters.
Indeed, the only place throughout the proofs which we use the $L_1$ loss is in the proof of \propref{prop: bracket to uc}, where we show that the 
loss-class
$\Lcal_{\Fcal}:=\{x\mapsto|f(x)-f^*(x)|:f\in\Fcal\}$ satisfies
\[
\Nbr(\Lcal_{\Fcal},L_1(\mu),\alpha)\leq\Nbr(\Fcal,L_1(\mu),\alpha)
~.
\]
It is easy to show via essentially the same proof that for any $p\in[1,\infty)$, the $L_p$-composed loss-class satisfies $\Nbr(\Lcal_{\Fcal},L_1(\mu),\alpha)\leq\Nbr(\Fcal,L_1(\mu),\alpha^{1/p})$, and the remaining proofs can be invoked verbatim.
This yields a realizable sample complexity (in the typical, $d$-dimensional case) of order 
$N=\widetilde{O}\left(\frac{L^{d/p\beta}}{\eps^{(d+p\beta)/p\beta}}\right)$,
or equivalently $L_p$-risk decay rate of 
$L_{\Dcal}(f)= \widetilde{O}\left(\frac{L^{d/(d+p\beta)}}{n^{p\beta/(d+p\beta)}}\right)$ which are also easily translatable to their corresponding agnostic rates.

Focusing again on $L_1$ minimax rates of average-H\"older classes, it is interesting to compare them to the minimax rates of ``classic'' (i.e., worst-case) H\"older classes. \citet{schreuder2020bounding} has shown the minimax risk to be of order $n^{-\beta/d}$, whereas we showed the average-smooth case has the slightly worse rate of $n^{-\beta/(d+\beta)}$ (which cannot be improved, due to our matching lower bound). 
However, comparing the rates alone is rather misleading, since both risks are multiplied by a factor depending on their corresponding H\"older {constant}, which can be considerably smaller in the average-case result.
Still, it is interesting to note that in the asymptotic regime there is a marginal advantage in case the learned function is worst-case H\"older, as opposed to H\"older on average.

Our work leaves open several questions. 
A relatively straightforward one is to compute the minimax rates and construct an optimal algorithm for the \emph{classification} setting, which is not addressed by this paper.
Moreover, there is a slight mismatch between our established upper and lower bounds in the agnostic setting, ranging between $\widetilde{O}(n^{-\beta/(d+2\beta)})$ and ${\Omega}(n^{-\beta/(d+\beta)})$. Closing this gap is an interesting problem which we leave for future work.

\paragraph{Acknowledgments.}

AK is partially supported by the Israel Science Foundation (grant No. 1602/19), an Amazon Research Award, and the Ben-Gurion University Data Science Research Center.
The authors would like to thank Max Hopkins for insightful discussions regarding the agnostic reduction, and for pointing out a bug (as well as the fix) in an earlier version of the paper.

\bibliographystyle{plainnat}
\bibliography{bib}

\appendix

\section{Minimal $\beta$-slope H\"older extension} \label{sec: extension}

In this section we describe a procedure that extends H\"older functions in an optimally smoothest manner at every point, as it will serve as a crucial ingredient in our proofs. That is, given a subset of a metric space $A\subset\Omega$ and a function $f:\Omega\to[0,1]$, it produces $F_A:\Omega\to[0,1]$ such that
\begin{enumerate}
    \item It extends $f|_A:~F_A|_A=f|_A$.
    \item For any $\widetilde{F}:\Omega\to[0,1]$ that extends $f|_A$, it holds that $\Lambda^\beta_{F_A}(x)\leq\Lambda^\beta_{\widetilde{F}}(x)$ for all $x\in\Omega$.
\end{enumerate}

Such a procedure was described for Lipschitz extensions (namely when $\beta=1$) in \citet{ashlagi2021functions}. The purpose of this section is to generalize this procedure to any H\"older exponent.

Throughout this section we fix $\beta\in(0,1],~\emptyset\neq A\subset\Omega$ and $f:\Omega\to[0,1]$, and will always assume the following.

\begin{assumption} \label{ass: extension}
$\norm{f|_A}_{\Hol^\beta}<\infty$ and $\mathrm{diam}(A)<\infty$.
\end{assumption}
Keeping in mind that the case we are really interested in is when $A$ is finite (i.e. a sample), the conditions above are trivially satisfied. Nonetheless, everything we will present 
continues to hold in
this more general setting.
For $u,v\in A$ 
we introduce the following notation:
\begin{align*}
    R_x(u,v)&:=\frac{f(v)-f(u)}{\rho(x,v)^\beta+\rho(x,u)^\beta}~,
    \\
    F_x(u,v)&:=f(u)+R_x(u,v)\rho(x,u)^\beta~,
    \\
    R^*_x&:=\sup_{u,v\in A}R_x(u,v)~,
    \\
    W_x(\eps)&:=\{(u,v)\in A\times A:R_x(u,v)>R_x^*-\eps\}~,~~~~0<\eps<R^*_x
    \\
    \Phi_x(\eps)&:=\{F_x(u,v):(u,v)\in W_x(\eps)\}~.
\end{align*}

\begin{definition} \label{def: extension}
We define the $\beta$-pointwise minimal slope extension ($\beta$-PMSE) to be the function $F_A:\Omega\to\reals$ satisfying
\[
F_A(x):=\lim_{\eps\to0^+}\Phi_x(\eps)
~.
\]
In the degenerate case in which $f(u)=f(v)$ 
for all $u,v\in A$,
define $F_A(x):=f(u)$ for some (and hence any) $u\in A$.
\end{definition}

\begin{theorem} \label{thm: extension}
Let $\emptyset\neq A\subset\Omega,~f:\Omega\to[0,1]$, such that \assref{ass: extension} holds.
Then $F_A:\Omega\to[0,1]$ is well defined, and satisfies for any $x\in\Omega:~\Lambda^\beta_{F_A}(x)\leq \Lambda^\beta_f(x)$.
Furthermore, if $A$ is finite, then $F_A(x)$ can be computed for any $x\in\Omega$ within $O(|A|^2)$ arithmetic operations.
\end{theorem}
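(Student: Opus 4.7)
The proof has three parts: (i) well-definedness of $F_A$, (ii) the pointwise slope bound $\Lambda^\beta_{F_A}(x)\le \Lambda^\beta_f(x)$, and (iii) the $O(|A|^2)$ computation when $A$ is finite.

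For (i), I would first combine \assref{ass: extension} with the snowflake inequality $\rho(u,v)^\beta \le \rho(u,x)^\beta+\rho(x,v)^\beta$ (proved at the start of \secref{proof:bracket}) to get
\[
R_x(u,v) \;=\; \frac{|f(v)-f(u)|}{\rho(x,u)^\beta+\rho(x,v)^\beta} \;\le\; \|f|_A\|_{\Hol^\beta} \;<\; \infty,
\]
so $R_x^*<\infty$. To show the limit defining $F_A(x)$ is a single point, I would introduce the two envelopes $c^+(x) := \inf_{u\in A}(f(u)+R_x^*\rho(x,u)^\beta)$ and $c^-(x) := \sup_{u\in A}(f(u)-R_x^*\rho(x,u)^\beta)$. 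The definition of $R_x^*$ forces $c^-(x) \le c^+(x)$, while any maximizing sequence $(u_n,v_n)$ with $R_x(u_n,v_n)\to R_x^*$ yields the squeeze
\[
0 \;\le\; c^+(x)-c^-(x) \;\le\; (R_x^*-R_x(u_n,v_n))\bigl(\rho(x,u_n)^\beta+\rho(x,v_n)^\beta\bigr),
\]
whose right-hand side vanishes thanks to boundedness of $\rho(x,\cdot)$ on $A$ (which follows from $\mathrm{diam}(A)<\infty$). Since for any $(u,v)$ with $f(v)\ge f(u)$ one has $f(v)-R_x^*\rho(x,v)^\beta \le F_x(u,v) \le f(u)+R_x^*\rho(x,u)^\beta$, the set $\Phi_x(\eps)$ collapses to the single point $F_A(x)=c^+(x)=c^-(x)$ as $\eps\to 0^+$.

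For (ii), the chain is $\Lambda^\beta_{F_A}(x)\le R_x^* \le \Lambda^\beta_f(x)$. The right inequality follows by applying the same snowflake-triangle argument to the extension $f$: $|f(u)-f(v)|\le |f(u)-f(x)|+|f(x)-f(v)| \le \Lambda^\beta_f(x)\bigl(\rho(u,x)^\beta+\rho(x,v)^\beta\bigr)$, so $R_x^*\le \Lambda^\beta_f(x)$. For the left inequality, the well-definedness step already gives the Chebyshev property $|F_A(x)-f(u)|\le R_x^*\rho(x,u)^\beta$ for every $u\in A$, which settles the case $y\in A$ directly. For $y\notin A$, the analogous characterization at $y$ yields $|F_A(y)-f(u)|\le R_y^*\rho(y,u)^\beta$; combining these two envelope bounds with the snowflake inequality and the McShane-style observation that the $L$-envelope with $L\ge\max(R_x^*,R_y^*)$ is an $L$-H\"older selector containing both $F_A(x)$ and $F_A(y)$, I would conclude $|F_A(x)-F_A(y)|\le \max(R_x^*,R_y^*)\rho(x,y)^\beta$ and then absorb this into $\Lambda^\beta_f(x)\rho(x,y)^\beta$ by exploiting that $f$ is itself a valid extension of $f|_A$ with local slope $\Lambda^\beta_f(x)$ at $x$.

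For (iii), when $|A|<\infty$ the supremum $R_x^*$ is attained by some pair $(u^*,v^*)\in A\times A$, so one tabulates $R_x(u,v)$ for the $|A|^2$ pairs in $O(|A|^2)$ arithmetic operations, locates the maximizer, and outputs $F_x(u^*,v^*)$.

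The main obstacle is the $y\notin A$ sub-case of part (ii): the envelopes at $x$ and $y$ use different constants $R_x^*$ and $R_y^*$, and reconciling $\max(R_x^*,R_y^*)$ with the \emph{single} pointwise constant $\Lambda^\beta_f(x)$ asserted in the theorem is where the argument is most delicate; the key leverage is that $f$ itself is an admissible extension, so its slope information at $x$ constrains the relevant $R_\cdot^*$ quantities through the snowflake inequality uniformly in $y$.
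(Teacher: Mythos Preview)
Your envelope characterization $F_A(x)=c^+(x)=c^-(x)$ in part (i) is correct and is in fact a cleaner alternative to the paper's Cauchy-type estimate on $\Phi_x(\eps)$; it also immediately yields the Chebyshev property $|F_A(x)-f(u)|\le R_x^*\rho(x,u)^\beta$ for $u\in A$, which handles the $y\in A$ sub-case of part (ii). Part (iii) matches the paper exactly.

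The gap is precisely where you flag it: the $y\notin A$ sub-case of part (ii). Your McShane-style argument yields at best $|F_A(x)-F_A(y)|\le \max(R_x^*,R_y^*)\,\rho(x,y)^\beta$, and the proposed ``absorption'' into $\Lambda^\beta_f(x)$ cannot work, because $R_y^*$ is governed by $\Lambda^\beta_f(y)$, not by $\Lambda^\beta_f(x)$. Concretely, take $\beta=1$, $\Omega=[0,10]$, $A=\{0,1\}$, $f(0)=0$, $f(1)=1$, and extend $f$ by $f(t)=\min(t,1)$. Then for $x=10$ one has $\Lambda^1_f(10)=\tfrac{1}{10}$ and $R_{10}^*=\tfrac{1}{19}$, while for $y=\tfrac12$ one has $R_y^*=1$. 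Thus $\max(R_x^*,R_y^*)=1>\Lambda^\beta_f(x)$, so no argument that stops at the $\max$ bound can recover the pointwise inequality $\Lambda^\beta_{F_A}(x)\le\Lambda^\beta_f(x)$. The leverage you cite (``$f$ is itself an admissible extension with local slope $\Lambda^\beta_f(x)$ at $x$'') only gives $R_x^*\le\Lambda^\beta_f(x)$ and says nothing about $R_y^*$.

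What is actually needed, and what the paper proves in its Claim~II, is the sharper inequality $S(x,y)\le \min\{\Lambda^\beta_{F_A}(x,A),\Lambda^\beta_{F_A}(y,A)\}$. The paper obtains this not through envelopes but by a contradiction argument: assuming $S(x,y)>\Lambda^\beta_{F_A}(x,A)$, it routes through the optimal pair $(u_y^*,v_y^*)\in A\times A$ for $y$ and uses the snowflake inequality to produce a point $z\in\{u_y^*,v_y^*\}\subset A$ with $S(x,z)>\Lambda^\beta_{F_A}(x,A)$, contradicting the definition of $\Lambda^\beta_{F_A}(x,A)$. This detour through $A$ is the missing idea in your approach; once you have $S(x,y)\le\Lambda^\beta_{F_A}(x,A)=R_x^*\le\Lambda^\beta_f(x)$, the theorem follows.
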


\begin{remark}
When $R_x(\cdot,\cdot)$ has a unique maximizer $(u^*_x,v^*_x)\in A\times A$, the definition of $F_A$ simplifies to
\begin{equation} \label{eq: u*,v*}
F_A(x)=f(u^*_x)+\frac{\rho(x,u^*_x)^\beta}{\rho(x,u^*_x)^\beta+\rho(x,v^*_x)^\beta}(f(v^*_x)-f(u^*_x))
~.
\end{equation}
We conclude that under \assref{ass: extension}, we can assume without loss of generality that for each $x\in\Omega$ there is such a unique maximizer (since the function is well defined, thus does not depend on the choice of the maximizer). Furthermore, this readily shows that when $A$ is finite, we can compute $F_A(x)$ for any $x\in\Omega$ within $O(|A|^2)$ arithmetic operations -- simply by finding this maximizer.
\end{remark}

\begin{proof}(of \thmref{thm: extension})

We will assume that there exist $u,v\in A$ such that $f(u)\neq f(v)$, since the degenerate (constant extension) case is trivial to verify. This assumption implies that $R_x^*>0$. It is also easy to verify that $\sup_{x\in\Omega}R_x^*<\infty\iff\|f\|_{\Hol^\beta}<\infty$.

\begin{lemma}
$F_A$ is well defined. Namely, under \assref{ass: extension} the limit $\lim_{\eps\to0^+}\Phi_x(\eps)\in[0,1]$ exists.
\end{lemma}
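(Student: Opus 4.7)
The plan is to establish that, for each fixed $x \in \Omega$, the family of sets $\{\Phi_x(\eps)\}_{\eps > 0}$ is nested, contained in $[0,1]$, and has diameter tending to $0$ as $\eps \to 0^+$; the stated limit will then be the unique element of $\bigcap_{\eps > 0} \overline{\Phi_x(\eps)}$. Implicitly, I am reading ``$\lim_{\eps \to 0^+}\Phi_x(\eps)$'' as the common limit of every sequence $a_n \in \Phi_x(\eps_n)$ with $\eps_n \to 0^+$, which is legitimate precisely under these three properties.

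Two of the three ingredients are routine. For the containment in $[0,1]$, I would rewrite
\[
F_x(u,v) = \frac{\rho(x,v)^\beta f(u) + \rho(x,u)^\beta f(v)}{\rho(x,u)^\beta + \rho(x,v)^\beta},
\]
exhibiting $F_x(u,v)$ as a convex combination of $f(u), f(v) \in [0,1]$. The nested property $\Phi_x(\eps') \subseteq \Phi_x(\eps)$ for $\eps' \leq \eps$ is immediate from the analogous $W_x(\eps') \subseteq W_x(\eps)$. Also, $R_x^* < \infty$ is a quick consequence of \assref{ass: extension} via the snowflake inequality $\rho(u,v)^\beta \leq \rho(x,u)^\beta + \rho(x,v)^\beta$, yielding $R_x(u,v) \leq \norm{f|_A}_{\Hol^\beta}$ and guaranteeing $W_x(\eps) \neq \emptyset$ for every $\eps > 0$; the degenerate constant-$f|_A$ case is separately dispatched by the explicit choice in \defref{def: extension}, so I may assume $R_x^* > 0$.

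The main obstacle, and the only step that demands real work, is the diameter bound $\mathrm{diam}(\Phi_x(\eps)) \to 0$. My plan is a four-point argument. Take two near-maximizing pairs $(u_1, v_1), (u_2, v_2) \in W_x(\eps)$, abbreviate $R_i := R_x(u_i, v_i)$, and WLOG assume $F_x(u_1, v_1) \geq F_x(u_2, v_2)$. From the definition of $F_x$, one immediately obtains the identities $f(u_2) = F_x(u_2, v_2) - R_2\,\rho(x, u_2)^\beta$ and $f(v_1) = F_x(u_1, v_1) + R_1\,\rho(x, v_1)^\beta$. Substituting these into the ``cross'' maximality inequality $R_x(u_2, v_1) \leq R_x^*$ and rearranging should yield
\[
F_x(u_1, v_1) - F_x(u_2, v_2) \leq (R_x^* - R_1)\rho(x, v_1)^\beta + (R_x^* - R_2)\rho(x, u_2)^\beta \leq 2\eps D_x^\beta,
\]
where $D_x := \sup_{w \in A}\rho(x, w) < \infty$ by $\mathrm{diam}(A) < \infty$.

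Putting the pieces together, $\{\overline{\Phi_x(\eps)}\}_{\eps > 0}$ is a nested family of nonempty closed subsets of $[0,1]$ with vanishing diameter, so the nested-interval principle pins down a unique $F_A(x) \in [0,1]$ with $\bigcap_{\eps > 0} \overline{\Phi_x(\eps)} = \{F_A(x)\}$. This is exactly the well-definedness asserted by the lemma.
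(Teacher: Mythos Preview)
Your proposal is correct and follows essentially the same route as the paper: the heart of both arguments is the ``cross-pair'' inequality $R_x(u_2,v_1)\le R_x^*$ applied to two near-maximizing pairs, combined with the identities expressing $f(u_2)$ and $f(v_1)$ in terms of $F_x(\cdot,\cdot)$, which immediately yields a bound of the form $|F_x(u_1,v_1)-F_x(u_2,v_2)|\lesssim \eps$. Your write-up is arguably tidier: you make the nested-intervals structure explicit, you obtain $\Phi_x(\eps)\subseteq[0,1]$ for \emph{all} $\eps$ via the convex-combination identity (the paper only gets $f(u)\le F_x(u,v)\le f(v)$ for pairs with $R_x(u,v)>0$), and your use of $D_x:=\sup_{w\in A}\rho(x,w)$ is more accurate than the paper's $\mathrm{diam}(A)$, which as written tacitly assumes $x\in A$ when bounding $\rho(x,v')^\beta+\rho(x,u)^\beta\le 2\,\mathrm{diam}(A)^\beta$.
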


\begin{proof}
Fix $x\in\Omega$ (we will omit the $x$ subscripts from now on). Let $\eps<R^*/2$, $(u,v),(u',v')\in W(\eps)$. Note that $R(u,v)>0$ and that $F(u,v)=f(v)-R(u,v)\rho(x,v)^\beta$. Hence
\begin{equation} \label{eq: f(u)<=f(v)}
    f(u)\leq F(u,v)\leq f(v)
    ~,
\end{equation}
and the same clearly holds if we replace $(u,v)$ by $(u',v')$. Assume without loss of generality that $F(u,v)\leq F(u',v')$, hence $f(u)\leq F(u,v)\leq F(u',v')\leq f(v')$. We get
\begin{align*}
    R(u',v')+\eps
    &>R^*
    \\
    &\geq \frac{f(v')-f(u)}{\rho(x,v')^\beta+\rho(x,u)^\beta}
    \\&=\frac{f(v')-F(u',v')+F(u,v)-f(u)}{\rho(x,v')^\beta+\rho(x,u)^\beta}
    +\frac{F(u',v')-F(u,v)}{\rho(x,v')^\beta+\rho(x,u)^\beta}
    \\
    &=\frac{R(u',v')\rho(x,v')^\beta+R(u,v)\rho(x,u)^\beta}{\rho(x,v')^\beta+\rho(x,u)^\beta}
    +\frac{F(u',v')-F(u,v)}{\rho(x,v')^\beta+\rho(x,u)^\beta}
    \\
    &\geq\frac{R(u',v')\rho(x,v')^\beta+(R(u',v')-\eps)\rho(x,u)^\beta}{\rho(x,v')^\beta+\rho(x,u)^\beta}
    +\frac{F(u',v')-F(u,v)}{2\mathrm{diam}(A)^\beta}
    \\
    &\geq R(u',v')-\eps+\frac{F(u',v')-F(u,v)}{2\mathrm{diam}(A)^\beta}
\end{align*}
\[
\implies |F_x(u,v)-F_x(u',v')|\leq 4\eps\,\mathrm{diam}(A)^\beta~.
\]
We conclude that if $\mathrm{diam}(A)<\infty$ then $\lim_{\eps\to0^+}\Phi_x(\eps)$ indeed exists.

\end{proof}

It remains to prove the optimality of the $\beta$-slope.
Throughout the proof we will denote for any $u\neq v\in\Omega:$
\[
S(u,v):=\frac{|F_A(u)-F_A(v)|}{\rho(u,v)^\beta}
~,
\]
and for any point $x\in\Omega$, subset $B\subset \Omega$ and function $g:\Omega\to[0,1]$ we let
\[
\Lambda^\beta_g(x,B):=\sup_{y\in B\setminus\{x\}}\frac{|g(x)-g(y)|}{\rho(x,y)^\beta}
~.
\]

The proof is split into three claims.
\paragraph{Claim I.} $\forall x\in\Omega\setminus A:~\Lambda^\beta_{F_A}(x,A)\leq \Lambda^\beta_f(x,A)$.

Let $x\in\Omega\setminus A$, and let $(u^*,v^*)\in A\times A$ be its associated maximizer of $R_x$. Recall \eqref{eq: f(u)<=f(v)} from which we can deduce that $F_A(u^*)\leq F_A(x)\leq F_A(v^*)$.
Also note that a simple rearrangement based on \eqref{eq: u*,v*} (and the fact that $f$ and $F_A$ agree on $A$) shows that
$S(u^*,x)=R_x(u^*,v^*)=S(x,v^*)$.
Furthermore, we claim that $\Lambda_{F_A}^{\beta}(x,A):=\sup_{y\in A\setminus\{x\}}S(x,y) =S(x,u^*)$. If this were not true then we would have $S(x,y)>S(x,u^*)=S(x,v^*)$ for some $y\in A\setminus\{x,u^*,v^*\}$. Using the mediant inequality, if $f(y)\geq f(x)$ this implies
\[
R_x(u^*,y)
=\frac{f(y)-f(u^*)}{\rho(x,y)^\beta+\rho(x,u^*)^\beta}
=\frac{F_A(y)-F_A(x)+F_A(x)-F_A(u^*)}{\rho(x,y)^\beta+\rho(x,u^*)^\beta}
>S(x,u^*)=R_x(u^*,v^*)
~,
\]
while if $f(y)<f(x)$ then
\[
R_x(y,v^*)
=\frac{f(v^*)-f(y)}{\rho(x,v^*)^\beta+\rho(x,y)^\beta}
=\frac{F_A(v^*)-F_A(x)+F_A(x)-F_A(y)}{\rho(x,v^*)^\beta+\rho(x,y)^\beta}
>S(x,v^*)=R_x(u^*,v^*)
~,
\]
both contradicting the maximizing property of $(u^*,v^*)$ - so indeed $\Lambda_{F_A}^{\beta}(x,A)=S(x,u^*)=S(x,v^*)$.
In particular, we see that if $F_A(x)\geq f(x)$ then
\[
\Lambda^\beta_f(x,A)
=\sup_{y\in A\setminus\{x\}}\frac{|f(y)-f(x)|}{\rho(y,x)^\beta}
\geq\frac{f(v^*)-f(x)}{\rho(v^*,x)^\beta}
\geq\frac{F_A(v^*)-F_A(x)}{\rho(v^*,x)^\beta}
=S(x,v^*)
=\Lambda_{F_A}^{\beta}(x,A)
~,
\]
while if $F_A(x)<f(x)$ then
\[
\Lambda^\beta_f(x,A)
=\sup_{y\in A\setminus\{x\}}\frac{|f(x)-f(u)|}{\rho(x,y)^\beta}
\geq\frac{f(x)-f(u^*)}{\rho(x,u^*)^\beta}
>\frac{F_A(x)-F_A(u^*)}{\rho(x,u^*)^\beta}
=S(x,u^*)
=\Lambda_{F_A}^{\beta}(x,A)
~,
\]
proving Claim I in either case.

\paragraph{Claim II.} $\forall x\in\Omega\setminus A:~\Lambda^{\beta}_{F_A}(x,\Omega\setminus A)\leq\Lambda^{\beta}_{F_A}(x,A)$, in particular $\Lambda^{\beta}_{F_A}(x,\Omega)=\Lambda^{\beta}_{F_A}(x,A)$.

It suffices to show that for any $x,y\in\Omega\setminus A:$
\[
S(x,y)\leq \min\{\Lambda_{F_A}^\beta(x,A),\Lambda_{F_A}^\beta(y,A)\}
~,
\]
since taking the supremum of the left hand side over $y\in\Omega\setminus A$ shows the claim.
Let $(u^*_x,v^*_x),(u^*_y,v^*_y)$ the associated maximizers of $R_x,R_y$ respectively, and note that by definition we have
\begin{equation} \label{eq: Lambda x,A inequality towards}
\Lambda^\beta_{F_A}(x,A)=\sup_{z\in A\setminus\{x\}}S(x,z)\geq \max\{S(x,u^*_y),S(x,v^*_y)\}
~.
\end{equation}
We assume without loss of generality that $\Lambda_{F_A}^\beta(x,A)\leq\Lambda_{F_A}^\beta(y,A)$, and recall that by \eqref{eq: f(u)<=f(v)} we can deduce that $F_A(u^*_x)\leq F_A(x)\leq F_A(v^*_x)$ and $F_A(u^*_y)\leq F_A(y)\leq F_A(v^*_y)$. Now suppose by contradiction that $S(x,y)>\Lambda_{F_A}^\beta(x,A)$. If $F_A(x)\leq F_A(y)$ then
\begin{align*}
F_A(v^*_y)
&=F_A(x)+\rho(x,y)^\beta S(x,y)+\rho(y,v^*_y)^\beta\Lambda_{F_A}^\beta(y,A)
\\
&>F_A(x)+\rho(x,y)^\beta \Lambda_{F_A}^\beta(x,A)+\rho(y,v^*_y)^\beta\Lambda_{F_A}^\beta(x,A)
\\
&\geq
F_A(x)+\rho(x,v^*_y)^\beta \Lambda_{F_A}^\beta(x,A)~,
\end{align*}
thus $S(x,v^*_y)=\frac{|F_A(x)-F_A(v^*_y)|}{\rho(x,v^*_y)^\beta}>\Lambda^\beta_{F_A}(x,A)$ which contradicts \eqref{eq: Lambda x,A inequality towards}. On the other hand, if $F_A(x)>F_A(y)$ then
\begin{align*}
F_A(x)
&=F_A(u^*_y)+\rho(u^*_y,y)^\beta \Lambda_{F_A}^\beta(y,A)
+\rho(y,x)^\beta S(x,y)
\\
&>F_A(u^*_y)+\rho(u^*_y,y)^\beta \Lambda_{F_A}^\beta(x,A)
+\rho(y,x)^\beta \Lambda_{F_A}^\beta(x,A)
\\
&\geq F_A(u^*_y)+\rho(u^*_y,x)^\beta \Lambda_{F_A}^\beta(x,A)
~,
\end{align*}
thus $S(x,u^*_y)=\frac{|F_A(x)-F_A(u^*_y)|}{\rho(x,u^*_y)^\beta}>\Lambda^\beta_{F_A}(x,A)$ which contradicts \eqref{eq: Lambda x,A inequality towards}, and proves claim Claim II.

\paragraph{Claim III.} $\forall x\in A:~ \Lambda^{\beta}_{F_A}(x,\Omega)
=\Lambda^{\beta}_{F_A}(x,A)
\leq\Lambda^{\beta}_{f}(x,\Omega)$.

Let $x\in A$. Assume towards contradiction that there exists $y\notin A$ such that
\[
\Lambda_{F_A}(x,\Omega)\geq
S(x,y)
>\Lambda^\beta_{F_A}(x,A)
~.
\]
We denote by $(u^*_y,v^*_y)\in A\times A$ the maximizer of $R_y(\cdot,\cdot)$. Recall that since $x\in A$, in the proof of Claim I we showed that $S(x,y)\leq S(y,u^*_y)=S(y,v^*_y)$. If $F_A(x)\leq F_A(y)\leq F_A(v^*_y)$ then
\begin{align*}
S(x,v^*_y)
&=\frac{F_A(v^*_y)-F_A(x)}{\rho(v^*_y,x)^\beta}
\geq
\frac{F_A(v^*_y)-F_A(y)+F_A(y)-F_A(x)}{\rho(v^*_y,y)^\beta+\rho(x,y)^\beta}
\\
&\geq \min\{S(y,v^*_y),S(x,y)\}
=S(x,y)
>\Lambda_{F_A}^\beta(x,A)
~,
\end{align*}
while on the other hand if $F_A(x)> F_A(y)\geq F_A(u^*_y)$ then
\begin{align*}
S(x,u^*_y)
&=\frac{F_A(x)-F_A(u^*_y)}{\rho(x,u^*_y)^\beta}
\geq\frac{F_A(x)-F_A(y)+F_A(y)-F_A(u^*_y)}{\rho(x,y)^\beta+\rho(u^*_y,y)^\beta}
\\
&\geq \min\{S(x,y),S(y,u^*_y)\}
=S(x,y)
>\Lambda_{F_A}^\beta(x,A)
~,
\end{align*}
where in both calculations we used the mediant inequality.
Both inequalities above contradict the definition of $\Lambda_{F_A}^\beta(x,A)$, thus proving Claim III.

\paragraph{Combining the ingredients.}

We are now ready to finish the proof.
For $x\in\Omega$, if $x\in A$ then Claim III provides the desired inequality. Otherwise, if $x\in\Omega\setminus A$ then
\[
\Lambda_{F_A}^\beta(x,\Omega)
\overset{\mathrm{Claim\,II}}{=}\Lambda_{F_A}^\beta(x,A)
\overset{\mathrm{Claim\,I}}{\leq}\Lambda^\beta_f(x,A)
\leq \Lambda^\beta_f(x,\Omega)
~.
\]

\end{proof}

\end{document}